\documentclass{article} 
\usepackage{iclr2026_conference,times}

\usepackage{amsmath,amsfonts,bm}

\def\eqref#1{equation~\ref{#1}}

\def\1{\bm{1}}

\DeclareMathAlphabet{\mathsfit}{\encodingdefault}{\sfdefault}{m}{sl}
\SetMathAlphabet{\mathsfit}{bold}{\encodingdefault}{\sfdefault}{bx}{n}

\usepackage{hyperref}
\usepackage{url}
\usepackage{comment}
\usepackage[utf8]{inputenc} 
\usepackage[T1]{fontenc}    
\usepackage{hyperref}       
\usepackage{url}            
\usepackage{booktabs}       
\usepackage{amsfonts}       
\usepackage{nicefrac}       
\usepackage{microtype}      
\usepackage{xcolor}         
\usepackage{natbib}
\usepackage{amsmath}
\usepackage{pifont}
\usepackage{amsthm}

\usepackage{comment}
\usepackage{multirow}
\usepackage{rotating}   
\usepackage[table]{xcolor} 
\usepackage{colortbl}     
\usepackage{siunitx}      
\usepackage{graphicx}      
\usepackage{adjustbox} 
\usepackage{booktabs}    
\usepackage{tcolorbox}
\usepackage{titletoc}
\usepackage{caption}        
\usepackage{wrapfig}
\usepackage{hyperref}
\usepackage{lipsum}
\usepackage{algorithm}
\usepackage{algorithmic}
\usepackage{enumitem}
\usepackage{subcaption}
\usepackage{float} 
\tcbuselibrary{skins}
\newtheorem{definition}{Definition}
\newtheorem{theorem}{Theorem}

\usepackage{xcolor}
\usepackage{hyperref}

\definecolor{lightpurple}{RGB}{120,90,200}
\hypersetup{
    colorlinks=true,
    linkcolor=lightpurple,
    citecolor=lightpurple,
    urlcolor=lightpurple
}
\title{GNN-as-Judge: Unleashing the Power of LLMs for Graph Learning with GNN Feedback}

\author{Ruiyao Xu, Kaize Ding\\
Department of Statistics and Data Science \\
Northwestern University\\
\texttt{ruiyaoxu2028@u.northwestern.edu} \\
\texttt{kaize.ding@northwestern.edu}
}

\iclrfinalcopy 

\begin{document}

\maketitle

\begin{abstract}
Large Language Models (LLMs) have shown strong performance on text-attributed graphs (TAGs) due to their superior semantic understanding ability on textual node features. However, their effectiveness as predictors in the low-resource setting, where labeled nodes are severely limited and scarce, remains constrained since fine-tuning LLMs usually requires sufficient labeled data, especially when the TAG shows complex structural patterns. In essence, this paper targets two key challenges: (i) the difficulty of generating and selecting reliable pseudo labels on TAGs for LLMs, and (ii) the need to mitigate potential label noise when fine-tuning LLMs with pseudo labels. To counter the challenges, we propose a new framework, \textit{GNN-as-Judge}, which can unleash the power of LLMs for few-shot semi-supervised learning on TAGs by incorporating the structural inductive bias of Graph Neural Networks (GNNs). Specifically, \textit{GNN-as-Judge} introduces a collaborative pseudo-labeling strategy that first identifies the most influenced unlabeled nodes from labeled nodes, then exploits both the agreement and disagreement patterns between LLMs and GNNs to generate reliable labels. Furthermore, we develop a weakly-supervised LLM fine-tuning algorithm that can distill the knowledge from informative pseudo labels while mitigating the potential label noise. Experiments on multiple TAG datasets demonstrate that \textit{GNN-as-Judge} significantly outperforms existing methods, particularly in low-resource regimes where labeled data are scarce.\footnote{Code is available at \url{https://github.com/rux001/GNN-as-Judge}.}

\end{abstract}

\section{Introduction}
Text-Attributed Graphs (TAGs), where nodes correspond to text documents and edges represent their relationships, are pervasive across various applications such as citation networks, social media platforms, and e-commerce ecosystems~\citep{ pmlrv5chang09a,yang2015network,hu2020open,wu2025comprehensive}. Unlike conventional attributed graphs, TAGs encode raw textual contents rather than numerical values, which requires more dedicated mechanisms to effectively capture semantic information while preserving structural relationships. Recent advances in Large Language Models (LLMs) have shown exceptional capabilities in text understanding~\citep{brown2020language, dong2022survey, minaee2024large}, driving growing interest in leveraging their exceptional text understanding capabilities to address TAG-related tasks~\citep{chen2024exploring,tang2024graphgpt, ye2023language, chen2023label, hu2024let,wang2025model,wu2025comprehensive}. Many previous studies investigate \textit{LLM-as-Predictors} approaches, which employ LLMs as direct predictors by integrating graph context through graph encoders or carefully crafted prompts~\citep{tang2024graphgpt,chen2024llaga,chen2024exploring}.


It is noteworthy that current research of \textit{LLMs-as-Predictors} for node classification on TAGs, primarily focuses on the \textit{supervised setting} where abundant labeled data is accessible. This is mainly because LLMs lack GNNs' message passing mechanisms to leverage unlabeled nodes, and thus require sufficient supervision signals for effective fine-tuning~\citep{chen2024llaga, tang2024graphgpt, ye2023language}. However, real-world graphs are usually sparsely labeled, and thus directly applying existing methods to such \textit{few-shot semi-supervised setting}~\citep{ding2020graph, wan2021contrastive, ding2022meta,yu2023empower} will easily lead to overfitting and poor generalization \citep{wu2025comprehensive, tang2024graphgpt, ye2023language}. Although one can leverage pseudo-labeling techniques to augment the limited labeled training data~\citep{lee2013pseudo, rizve2021defense,Qiao_2018_ECCV,liu2022confidence,sun2020multi,li2018deeper}, previous studies have recognized that ``easy'' pseudo labels with high confidence provide limited learning signal, whereas ``hard'' samples are more informative but introduce greater label noise~\citep{bengio2009curriculum, kumar2010self, 10.5555/3495724.3497504}. This challenge becomes even more pronounced when incorporating LLMs, leading to the following two unresolved challenges in few-shot semi-supervised node classification on TAGs:

\begin{itemize}[leftmargin=*,itemsep=0.2pt,topsep=1.pt]
\vspace{-5pt}
\item \ding{182} \textit{How to go beyond the knowledge of LLMs to obtain reliable pseudo-labeled data?} Since LLMs are inherently difficult to interpret complex graph structural patterns~\citep{huang2024can, guo2023gpt4graph}, solely relying on LLMs with no structural inductive bias to generate pseudo labels is less desirable. More critically, not all unlabeled nodes are equally valuable for pseudo-labeling, thus, selecting the most influential subset of unlabeled nodes is crucial for maximizing performance under computational constraints. Although text-based serialization methods try to encode structural context into the prompts, those methods could still struggle with self-generated pseudo labels due to the hallucination and self-bias of LLMs~\citep{chen2024exploring,luo2024robustft,liu2022confidence,gao2024impact}. Generating reliable pseudo-labeled examples that encode not only textual but also structural inductive biases is therefore crucial for enabling LLMs to transcend their inherent knowledge limitations on graphs.

\item \ding{183} \textit{How to extract the knowledge from pseudo-labeled data while mitigating the potential label noise during LLM fine-tuning?} Despite pseudo-labeling showing its empirical effectiveness in many fields~\citep{lee2013pseudo,rizve2021defense}, the potential label noise has been a longstanding problem. Especially for those ``hard'' pseudo-labeled examples that are more valuable for training the model, they could also bring more label noise if the labels are incorrect. Simply performing LLM supervised fine-tuning with the noisy pseudo labels can lead to performance degradation~\citep{shumailov2023curse, kim2023instructive, cheng2020learning}, which necessitates the need to develop a new learning algorithm that can effectively distill the knowledge and mitigate the label noise when fine-tuning with pseudo-labeled data.

\end{itemize}

\vspace{-5pt}

In this paper, we propose \textit{GNN-as-Judge}, a novel framework that fine-tunes LLMs on sparsely labeled graphs using feedback from GNNs. Instead of mining the ``easy'' and ``hard'' pseudo-labeled nodes solely based on LLM itself like standard self-training approaches~\citep{ma2017self, lee2013pseudo, 10.5555/3495724.3497504}, at its core, a GNN with structural inductive bias acts as a judge to provide additional guidance for LLM to generate reliable pseudo-labels. \textit{GNN-as-Judge} strategically leverages both agreement and disagreement between GNN and LLM as signals to identify not only ``easy'', but more remarkably, ``hard'' pseudo labels that LLMs are more likely to make mistakes. To further mitigate the potential label noise, especially in the harder examples, we develop a weakly-supervised LLM fine-tuning algorithm that jointly performs fine-tuning on the two selected pseudo-labeled node sets. Specifically, in addition to applying supervised LLM instruction tuning on the agreement (easy) node set, we propose to conduct preference tuning on the disagreement (hard) node set, which allows LLM to learn relative preferences between predictions from the two models. To summarize, our contributions are mainly three-folds:


\begin{itemize}
[leftmargin=*,itemsep=0.2pt,topsep=1.pt]
    \vspace{-5pt}
    \item We study the problem of \textit{LLMs-as-Predictors} for graph few-shot semi-supervised learning, a fundamental but underexplored research problem, where the key challenges lie in selecting reliable pseudo labels and mitigating label noise during fine-tuning.

    \item We propose \textit{GNN-as-Judge}, a novel framework that positions GNNs as judges to select reliable pseudo labels for fine-tuning LLMs. \textit{GNN-as-Judge} is also equipped with a new weakly-supervised fine-tuning algorithm that can further mitigate label noise during LLM fine-tuning. 
    \item We conduct comprehensive experiments on different TAG datasets with various scales. Results demonstrate that \textit{GNN-as-Judge} significantly outperforms both traditional GNN-based approaches and other LLM-based baselines, especially in extreme low-resource scenarios. 
\end{itemize}

\section{Related Work}

\textbf{LLMs for Graph Learning.} Recent research has extensively explored applying (L)LMs to TAGs, yielding significant advancements in feature encoding, node classification, and link prediction~\citep{chen2024exploring, tang2024graphgpt, ye2023language, hu2024let,wang2025model,wu2025comprehensive}. Early work primarily employed small-scale pretrained language models such as BERT~\citep{devlin2019bert} or RoBERTa~\citep{liu2019roberta} as text feature encoders to extract enhanced representations for graph learning~\citep{yang2021graphformers, li2023grenade, G2P2,zhao2023learning}. With the advent of powerful large language models such as ChatGPT~\citep{achiam2023gpt}, researchers began utilizing these models in two primary ways: \textit{as-enhancers} and \textit{as-predictors}. \textit{LLM-as-Enhancers} methods~\citep{he2023harnessing, chen2023label, noisyoracles,yu2023empower,liu2023one,wang2025model} utilizes LLMs to generate enhanced knowledge in the form of explanations, embeddings or labels based on the original graph data. For example, TAPE~\citep{he2023harnessing} uses LLM-generated explanations as enriched node features for GNN learning. \textit{LLM-as-Predictors} methods frames graph problems as text generation tasks, directly utilizing LLMs as predictors to output classification or prediction results by transforming graph structures into natural language descriptions~\citep{wang2024llms, ye2023language, chen2024llaga, tang2024graphgpt, wang2024instructgraph, chen2024exploring,hu2024let}. For instance, LLaGA~\citep{chen2024llaga} introduces template-based graph-to-text conversion with a specialized projector for structure comprehension, while GraphGPT~\citep{tang2024graphgpt} performs multi-stage instruction tuning to align LLMs with graph patterns. Although \textit{LLM-as-Predictors} methods show promising performance and zero-shot transfer ability, they typically assume abundant labeled nodes are available. Recent work~\citep{wu2025comprehensive} demonstrates that standalone LLMs are weak predictors with limited labeled data, and incorporating graph structure is essential for achieving satisfactory performance.

\textbf{Pseudo Label Selection.}
In semi-supervised learning, pseudo-labeling~\citep{lee2013pseudo, shi2018transductive} serves as an effective solution by augmenting limited labeled datasets with generated labels. To mitigate the potential label noise, previous research usually leverages model's confidence to select ``easy'' samples that are considered to be clean~\citep {ma2017self, kumar2010self}. Recent research, however, argues that there is little information to gain with these ``easy'' examples~\citep{10.5555/3495724.3497504} and merely relying on high-confidence examples may make the model self-biased \citep{rizve2021defense,liu2022confidence}. Consequently, current research emphasizes the importance of mining both ``easy'' and ``hard'' samples during training to maximize model performance \citep{liu2022confidence, 10.5555/3495724.3497504, shrivastava2016training}. Nevertheless, identifying the ``easiness'' or ``hardness'' of samples is often non-trivial,especially for LLMs. 

\section{Proposed Approach}
\label{sec:methods}
\vspace{-5pt}
In this section, we propose an LLM-GNN pseudo co-labeling framework \textit{GNN-as-Judge} that addresses LLM pseudo-labeling challenges through three core designs as shown in Figure~\ref{fig:framework}: (i) a subset selection strategy that identifies nodes with most information from labeled nodes for pseudo labeling; (ii) a collaborative pseudo label selection mechanism that leverages GNN as complementary signal sources for LLM to generate high-quality pseudo labels, and (iii) a weakly-supervised fine-tuning algorithm that mitigates label noise when fine-tuning LLM with generated pseudo labels.

\subsection{Preliminaries}

We consider the problem of \textit{few-shot semi-supervised node classification} on text-attributed graphs (TAGs) defined as $\mathcal{G} = (\mathcal{V}, \mathcal{E}, \mathbf{A}, \mathbf{X})$, where $\mathcal{V} = \{v_1, v_2, \ldots, v_N\}$ is the node set, $\mathcal{E} \subseteq \mathcal{V} \times \mathcal{V}$ is the edge set, $\mathbf{A} \in \{0,1\}^{N \times N}$ is the adjacency matrix, and $\mathbf{X} \in \mathbb{R}^{N \times F}$ contains node features derived from text. Given a small labeled set $\mathcal{V}_\text{train}$ where each class $c \in \{1, \ldots, C\}$ has exactly $k$ labeled nodes, along with validation set $\mathcal{V}_\text{val}$, the goal is to predict labels for test nodes $\mathcal{V}_\text{test} = \mathcal{V} \setminus (\mathcal{V}_\text{train} \cup \mathcal{V}_\text{val})$. Common approaches include: (1) \textit{GNN-as-Predictors} methods that learn node representations via message passing, yielding predictions $\hat{y}_i^\text{GNN} = f_{\phi}(v_i, \mathbf{A}, \mathbf{X})$ with trainable parameters $\boldsymbol{\phi}$, and (2) \textit{LLM-as-Predictors} methods that construct prompts $\mathcal{P}(v_i)$ from textual attributes and generate predictions $\hat{y}_i^\text{LLM} = \mathcal{M}_\theta(\mathcal{P}(v_i))$ using language models with parameters $\boldsymbol{\theta}$.

\begin{figure}[t!]
    \centering
    \includegraphics[scale=0.33]{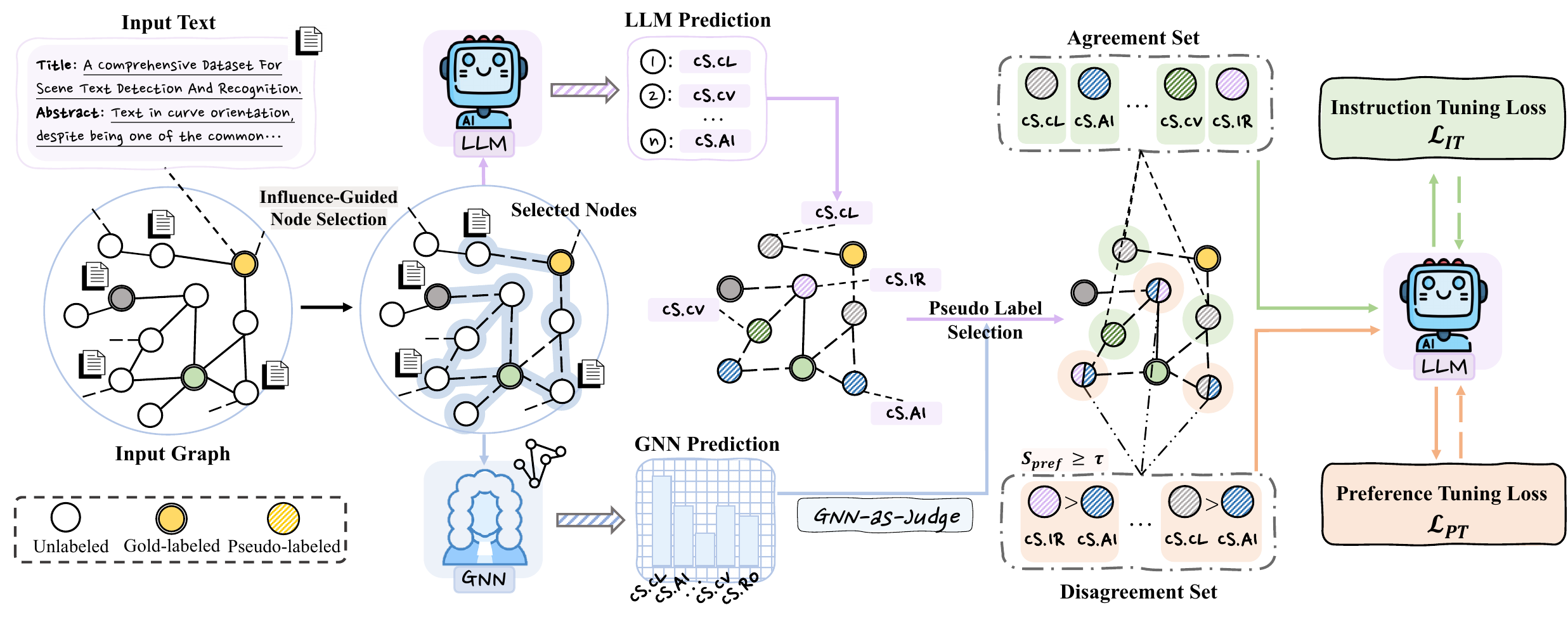}
        \caption{Framework of \textit{GNN-as-Judge} for few-shot semi-supervised node classification on TAGs.}
    \vspace{-12pt}
    \label{fig:framework}
\end{figure}

\subsection{GNN-as-Judge for LLM Pseudo Labeling on Graphs}
To break the bottleneck of using LLM-generated pseudo labels, our approach tries to mine ``easy'' and ``hard'' pseudo labeled nodes based on the agreement and disagreement between LLM and GNN, in order to provide reliable pseudo labels for fine-tuning the LLMs. To avoid the computational constraints of pseudo-labeling over the entire unlabeled set, we first identify the most influential unlabeled nodes according to graph structures, then apply our collaborative pseudo labeling framework to this selected subset. Specifically, we leverage the complementary strengths of two distinct models: a structure-aware GNN $f_{\phi}$ and a text-centric LLM $\mathcal{M}_{\mathcal{\theta}}$, both trained on the labeled set $\mathcal{V}_{\text{train}}$. 

\textbf{Influence-Guided Node Selection for Pseudo Labeling.}
Due to computational constraints and efficiency considerations, it is crucial to select a candidate node subset from the entire unlabeled node set for pseudo labeling. While LLMs excel at processing textual information, they lack awareness of graph structures and cannot readily capture the influence of labeled data on unlabeled nodes~\citep{dai2024large, chen2024exploring}. To address this, we employ GNNs as a structural proxy to identify subsets of nodes that carry the most information from labeled data. We leverage the concept of \textit{node influence} that quantifies the extent to which one labeled node's representation can impact another unlabeled node's representation through the graph structure~\citep{xu2018representation, huang2020graph, wang2020unifying, zhang2021rim,wang2022graph}. We first provide the formal definition:
\begin{definition}[\textbf{Node Influence}]
\label{def:node_influence}
Node influence $I_{v_i,v_j}$ of node $v_i$ on node $v_j$ in the final GNN output is: $I_{v_i,v_j} = \left\|\partial \mathbf{x}_{v_j}^{(\infty)} / \partial \mathbf{x}_{v_i}^{(\infty)}\right\|$, where $\mathbf{x}_{v_i}^{(\infty)}$ and $\mathbf{x}_{v_j}^{(\infty)}$ are the final node representations of $v_i$ and $v_j$ learned by the neighborhood aggregation mechanism after infinite layers, the norm is any subordinate norm, and the Jacobian measures how a change in the representation of $v_i$ translates to a change in the representation of $v_j$.
\end{definition} 
\vspace{-5pt}
Definition~\ref{def:node_influence} captures how effectively labeled nodes can propagate their information to unlabeled nodes. Nodes with high influence from labeled data are ideal candidates for pseudo labeling because: (i) they receive stronger signals from the labeled set, making pseudo labels more reliable, and (ii) they better reflect the distributional characteristics of the labeled nodes, ensuring representativeness and diversity. We then propose Theorem~\ref{thm:node_influence_decay}, which establishes that the influence of node $v_i$ on node $v_j$ decays with their distance and provides a computable upper bound for node influence. The proof is provided in Appendix~\ref{sec:node_influence_decay}.

\begin{theorem}[]
\label{thm:node_influence_decay}
Let $\mathcal{P}_{v_i,v_j}$ denote the set of all paths between nodes $v_i$ and $v_j$, and let $\mathcal{P}^*_{v_i,v_j} \subseteq \mathcal{P}_{v_i,v_j}$ denote the set of shortest paths. For any path $t \in \mathcal{P}_{v_i,v_j}$, let $D^t_{GM}$ be the geometric mean of node degrees occurring on path $t$ defined as $D^t_{GM} = \left(\prod_{v_k \in t} D_{v_k v_k}\right)^{1/|t|}$, where $|t|$ is the path length and $D_{v_k v_k}$ is the degree of node $v_k$. Define $D^*_{GM} = \min_{t \in \mathcal{P}^*_{v_i,v_j}}\{D^t_{GM}\}$ as the minimum geometric mean among shortest paths, $h^* = d(v_i, v_j)$ as the shortest path distance between $v_i$ and $v_j$, and $|\mathcal{P}^*_{v_i,v_j}|$ as the number of shortest paths between $v_i$ and $v_j$. Then, the node influence $I_{v_i,v_j}$ from $v_i$ to $v_j$ satisfies:
\begin{equation}
I_{v_i,v_j} = \left\|\partial \mathbf{x}_{v_j}^{(\infty)} / \partial \mathbf{x}_{v_i}^{(\infty)}\right\| \leq \frac{|\mathcal{P}^*_{v_i,v_j}|}{(D^*_{GM})^{h^*}}
\end{equation}
\end{theorem}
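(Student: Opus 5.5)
This bound comes from the linearized random-walk view of node influence used by \citet{xu2018representation} and \citet{wang2020unifying}. We will model the aggregation as linear, or ReLU in expectation as in \citet{xu2018representation}, with symmetric degree normalization $\hat{\mathbf{A}} = \mathbf{D}^{-1/2}\mathbf{A}\mathbf{D}^{-1/2}$, so that $\mathbf{x}^{(l+1)}_{v} = \sum_{u} \hat{\mathbf{A}}_{vu} \mathbf{W}^{(l)} \mathbf{x}^{(l)}_u$. Unrolling the recursion and applying the chain rule, the Jacobian $\partial \mathbf{x}_{v_j}/\partial \mathbf{x}_{v_i}$ becomes a sum over walks $t$ from $v_i$ to $v_j$. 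Each walk contributes the product of its edge weights $\prod_{(u,w)\in t} (D_{uu}D_{ww})^{-1/2}$ times a product of weight matrices. We assume these matrices are normalized, i.e.\ have subordinate norm at most $1$, which is the standard simplification in this line of work. Taking any subordinate norm and using the triangle inequality and submultiplicativity, $I_{v_i,v_j}$ is bounded by the sum over paths of these scalar products.

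The key step is to express each path's scalar weight through degrees on the path. For a path $t$ of length $|t|$, every interior node appears in two consecutive edges and so contributes $D_{v_kv_k}^{-1}$ overall. The endpoints contribute only $D^{-1/2}$ each. We will handle the endpoints either by adopting the row-normalized convention $\mathbf{D}^{-1}\mathbf{A}$, where each step contributes exactly one inverse degree, or by absorbing the endpoint factors into the count of $|t|$ nodes as the paper's definition of $D^t_{GM}$ does. Either way the path weight becomes $\prod_{v_k\in t} D_{v_kv_k}^{-1}$ over $|t|$ factors, which equals $(D^t_{GM})^{-|t|}$ by definition of the geometric mean.

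Next we pass to the infinite-depth limit and restrict to shortest paths. We will follow the argument of \citet{wang2020unifying}: the dominant, and in their idealized accounting the only non-vanishing, contributions come from paths of minimal length $h^*$. Longer walks carry additional factors $D^{-1}\le 1$ and, with decaying or normalized propagation as in a PageRank or label-propagation limit, contribute terms of strictly smaller order. Keeping only shortest paths gives
\begin{equation}
I_{v_i,v_j} \le \sum_{t\in\mathcal{P}^*_{v_i,v_j}} (D^t_{GM})^{-h^*} \le \frac{|\mathcal{P}^*_{v_i,v_j}|}{(D^*_{GM})^{h^*}},
\end{equation}
where the last inequality bounds each term by the largest one, attained at the minimal geometric mean $D^*_{GM}$.

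The main obstacle is making the reduction to shortest paths rigorous. The infinite-layer sum over all walks does not literally vanish beyond length $h^*$; for a random walk it converges to a stationary quantity. We would first try to follow \citet{wang2020unifying} and state the bound for the $h^*$-th layer contribution, or equivalently under the idealization that the representation is determined by the first-arrival paths. If a cleaner route is needed, we would introduce a teleport or decay factor $\alpha<1$ and argue that the non-shortest terms are dominated.
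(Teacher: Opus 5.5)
Your first two steps are essentially the paper's proof. It also takes $\hat{\mathbf{A}}=\mathbf{D}^{-1}\mathbf{A}$ with identity activation and weights, expands the Jacobian as a sum over paths of products of inverse degrees, and bounds it by the number of paths times the largest term. You should commit to $\mathbf{D}^{-1}\mathbf{A}$ rather than saying ``either way''. Under $\mathbf{D}^{-1/2}\mathbf{A}\mathbf{D}^{-1/2}$ a path's weight is $\sqrt{D_{v_jv_j}/D_{v_iv_i}}$ times its row-normalized weight, so the endpoint factors cannot be absorbed, and the bound fails whenever $D_{v_jv_j}>D_{v_iv_i}$.

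The genuine gap is your third step, the reduction to shortest paths. After the triangle inequality you have $I_{v_i,v_j}\le\sum_t w(t)$ with every $w(t)\ge 0$. Dropping the walks longer than $h^*$ therefore gives a \emph{smaller} quantity: $\sum_{t\in\mathcal{P}^*_{v_i,v_j}}w(t)$ is a lower bound for your right-hand side, not an upper bound for $I_{v_i,v_j}$. ``Shortest paths dominate'' is an asymptotic heuristic. It yields no inequality without an extra constant that the statement does not have. Even termwise it fails: a longer path through low-degree nodes can outweigh a shortest path through a hub.

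If you read the infinite-depth influence as the limit of $\|\partial\mathbf{x}^{(L)}_{v_j}/\partial\mathbf{x}^{(0)}_{v_i}\|=[(\mathbf{D}^{-1}\mathbf{A})^L]_{v_jv_i}$, the claimed inequality is actually false. This entry is a random-walk transition probability, and on a connected non-bipartite graph it converges to $D_{v_iv_i}/\sum_k D_{v_kv_k}$, independent of the distance. On the $7$-cycle, two nodes at distance $3$ have a unique shortest path and $D^*_{GM}=2$. The bound claims $1/8$, while the limit is $1/7$.

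A decay or teleport factor does not rescue the statement as written. With weak decay, the (normalized) decayed walk sum on this example still tends to $1/7$. It only helps when the decay is strong relative to the degrees, and that changes the model.

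Your first fallback is the only rigorous version, but it proves a different theorem. Take a linear propagation of depth exactly $h^*$ with weights of norm at most $1$. The only walks of length $h^*$ from $v_j$ to $v_i$ are shortest paths, so $\|\partial\mathbf{x}^{(h^*)}_{v_j}/\partial\mathbf{x}^{(0)}_{v_i}\|\le\sum_{t\in\mathcal{P}^*_{v_i,v_j}}\prod_{k=0}^{h^*-1}D_{t_kt_k}^{-1}\le|\mathcal{P}^*_{v_i,v_j}|/(D^*_{GM})^{h^*}$. Here $t=(t_0=v_j,\dots,t_{h^*}=v_i)$, and the geometric mean is taken over the $h^*$ nodes of $t$ other than $v_i$, which is the convention implicit in the paper's degree products.

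The paper's proof does not close this gap either. It ends at $|\mathcal{P}_{v_j,v_i}|\,(D^{t^*}_{GM})^{-|t^*|}$, with $t^*$ the maximizing path over \emph{all} paths. It never justifies replacing this by $|\mathcal{P}^*_{v_i,v_j}|$, $D^*_{GM}$ and $h^*$ as in the statement. So the obstacle you flagged is real. The honest fix is to restate the result for depth $h^*$, or for the $h^*$-th term of the walk expansion, rather than for $\mathbf{x}^{(\infty)}$.
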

\vspace{-5pt}
Based on Theorem~\ref{thm:node_influence_decay}, we define the \textit{influence score} of an unlabeled node $v_j \in \mathcal{V}_{\text{unlabeled}}$ as its maximum influence from any labeled node:
\begin{equation}
\mathcal{IS}(v_j) = \max_{v_i \in \mathcal{V}_{\text{train}}} I_{v_i,v_j} = \max_{v_i \in \mathcal{V}_{\text{train}}} \frac{|\mathcal{P}^*_{v_i,v_j}|}{(D^*_{GM})^{h^*}}
\end{equation}
We then rank all unlabeled nodes based on their influence scores and select the top $K$ nodes with the highest values:
\begin{equation}
\mathcal{V}_{\text{selected}} = \text{TopK}\left(\{\mathcal{IS}(v_j) : v_j \in \mathcal{V}_{\text{unlabeled}}\}, K\right)
\end{equation}
This formulation ensures we select unlabeled nodes that are most strongly influenced by the labeled node set, maximizing the potential for effective information propagation during pseudo labeling. After obtaining the predictions from both the GNN and LLM on the selected node set $\mathcal{V}_{\text{selected}}$, we further partition the selected unlabeled node set into agreement node set $\mathcal{V}_{\text{agreed}} = \{v_i \in \mathcal{V}_\text{selected} \mid \hat{y}_i^{\text{GNN}} = \hat{y}_i^{\text{LLM}}\}$ and disagreement node set $\mathcal{V}_{\text{disagreed}} = \{v_i \in \mathcal{V}_\text{selected} \mid \hat{y}_i^{\text{GNN}} \neq \hat{y}_i^{\text{LLM}}\}$. The intuition behind using both agreement and disagreement node sets is that they serve complementary roles in effective LLM fine-tuning. While agreement nodes provide more reliable learning signals that consolidate the model's understanding, carefully selected disagreement nodes offer challenging examples that provide more informative learning signals.  

\textbf{Agreement Node Set Selection with GNN Feedback.}  We assume that nodes in the agreement set are more likely to have correct pseudo labels, as agreement between models with distinct inductive biases suggests higher label reliability. We provide theoretical justification for this assumption, which shows that the expected accuracy of the agreement set is strictly  than that of either individual model:
\begin{theorem}[]
\label{thm:agreement_superiority}
Consider $\mathcal{V}_{\text{selected}}$ with ground truth labels $\mathbf{y}^*$. Let $p_{\text{LLM}}$ and $p_{\text{GNN}}$ denote the individual accuracies of the LLM and GNN respectively. Under conditional independence of model errors due to different inductive biases and uniform error distribution across incorrect classes, the accuracy of the agreement set satisfies:
\begin{equation}
P(y_i^* | v_i \in \mathcal{V}_{\text{agreed}}) \geq \frac{p_{\text{LLM}} \cdot p_{\text{GNN}}}{p_{\text{LLM}} \cdot p_{\text{GNN}} + \frac{(1-p_{\text{LLM}})(1-p_{\text{GNN}})}{C-1}}
\end{equation}
where $C$ is the number of classes. Furthermore, this lower bound exceeds $\max(p_{\text{LLM}}, p_{\text{GNN}})$ when both models perform better than random guessing.
\end{theorem}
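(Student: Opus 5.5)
The plan is to compute the agreement-set accuracy directly by Bayes' rule, under the two stated assumptions, and then show the resulting expression dominates each individual accuracy. Fix a node $v_i \in \mathcal{V}_{\text{selected}}$ with true label $y_i^*$. Conditional independence of errors means that, given $y_i^*$, the events $\{\hat{y}_i^{\text{LLM}} = \cdot\}$ and $\{\hat{y}_i^{\text{GNN}} = \cdot\}$ are independent. Uniform errors mean each wrong class is predicted with probability $(1-p_{\text{LLM}})/(C-1)$, respectively $(1-p_{\text{GNN}})/(C-1)$.

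\textbf{Step 1: the agreement probability.} Decompose the event $v_i \in \mathcal{V}_{\text{agreed}}$ into two disjoint parts. The first is \emph{both correct}, with probability $p_{\text{LLM}} p_{\text{GNN}}$. The second is \emph{both wrong but equal}. Summing over the $C-1$ wrong classes $c$ gives
\[
P(\text{both wrong, agree}) = \sum_{c \neq y_i^*} \frac{1-p_{\text{LLM}}}{C-1}\cdot\frac{1-p_{\text{GNN}}}{C-1} = \frac{(1-p_{\text{LLM}})(1-p_{\text{GNN}})}{C-1}.
\]

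\textbf{Step 2: Bayes' rule.} Dividing the first part by the total yields the displayed expression, which under the stated assumptions holds with equality. I will remark that the inequality direction covers the case where wrong predictions are less concentrated than assumed, i.e.\ any relaxation in which the wrong-agreement mass is at most $(1-p_{\text{LLM}})(1-p_{\text{GNN}})/(C-1)$. By Cauchy--Schwarz, the wrong-agreement mass is maximized among product-form error distributions with given marginals when the errors align. I would state the result cleanly as equality under uniformity.

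\textbf{Step 3: comparison with $\max(p_{\text{LLM}}, p_{\text{GNN}})$.} Write $a = p_{\text{LLM}}$, $b = p_{\text{GNN}}$, and let $B$ denote the bound. By symmetry it suffices to show $B > a$, i.e.
\[
ab > a\left(ab + \tfrac{(1-a)(1-b)}{C-1}\right) \iff b(1-a) > \tfrac{a(1-a)(1-b)}{C-1},
\]
where I used $a > 0$. Dividing by $1-a > 0$ (assume $a < 1$; if $a = 1$ the bound equals $1$ and the claim holds with equality) reduces this to
\[
(C-1)\,b > a(1-b).
\]

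\textbf{Main obstacle.} This last reduction is where the hypothesis "better than random" must be used carefully, and I expect it to be the main obstacle. With $b > 1/C$ we have $(C-1)b > 1-b$, and since $a \le 1$ this gives $(C-1)b > 1-b \ge a(1-b)$, so the inequality holds. Thus only $p_{\text{GNN}} > 1/C$ is needed for $B > p_{\text{LLM}}$, and symmetrically $p_{\text{LLM}} > 1/C$ is needed for $B > p_{\text{GNN}}$. Together these give $B > \max(p_{\text{LLM}}, p_{\text{GNN}})$, strictly whenever the maximum is below $1$. The delicate points to handle explicitly are the degenerate boundary cases $a = 1$ or $b = 1$, and making clear that each comparison uses the other model's better-than-random accuracy.
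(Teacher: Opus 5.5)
Your overall route matches the paper's: split agreement into ``both correct'' and ``both wrong on the same class'', get the wrong-agreement mass $(1-a)(1-b)/(C-1)$, apply Bayes' rule, then cross-multiply and reduce the comparison with $\max(a,b)$ to a better-than-random condition. However, Step~3 contains an algebra slip that breaks the logic as written. From $ab > a^2 b + \frac{a(1-a)(1-b)}{C-1}$, dividing by $a>0$ gives $b(1-a) > \frac{(1-a)(1-b)}{C-1}$. You divided only the left-hand side and kept the factor $a$ on the right.

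Your stated equivalence $B>a \iff (C-1)b > a(1-b)$ is therefore false. Take $C=2$, $a=0.5$, $b=0.4$: then $(C-1)b = 0.4 > 0.3 = a(1-b)$, yet $B = 0.2/(0.2+0.3) = 0.4 < a$. So the implication you actually rely on, $(C-1)b > a(1-b) \Rightarrow B > a$, is invalid. Your conclusion comes out right only because the intermediate inequality you derive on the way, $(C-1)b > 1-b$, happens to be the correct reduced condition.

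Once the slip is fixed, the reduction is exactly $(C-1)b > 1-b$, i.e.\ $p_{\text{GNN}} > 1/C$, as in the paper, with no need to invoke $a \le 1$. The ``main obstacle'' you anticipate is an artifact of the error. The corrected algebra also shows the hypothesis is sharp rather than merely sufficient: for $0<a<1$, $B > a$ holds if and only if $p_{\text{GNN}} > 1/C$.

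Separately, your Cauchy--Schwarz aside in Step~2 points the wrong way. The wrong-agreement mass can be as large as $(1-a)(1-b)$ when both models' errors concentrate on a common wrong class. So the uniform value is not an upper bound on that mass, and the displayed expression is not a lower bound once uniformity is dropped. Also, no single model's errors can be ``less concentrated'' than uniform; a smaller mass requires the two models to err on different classes. Stating the result as an equality under the stated hypotheses, as you ultimately propose and as the paper's derivation in fact yields, is the right formulation.
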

The proof is provided in Appendix~\ref{sec:agreement}. This theoretical result confirms that the agreement set provides higher-quality pseudo labels compared to using either model individually.

\textbf{Disagreement Node Set Selection with GNN Feedback.} While pseudo-labeled nodes in the agreement node set are provably more reliable, simply using easy, self-generated labels where the LLM already predicts correctly provides limited new learning signals and may lead to overfitting. To counter this issue, we propose to employ GNN as a judge for selecting high-quality pseudo labels in the disagreement node set. Since our method selects structurally influential nodes for pseudo labeling and the GNN's message-passing mechanism can exploit local neighborhood information that is inaccessible to the LLM, the GNN is assumed to be more reliable than LLM for pseudo labeling in this set. To further ensure the quality of pseudo labels, we utilize GNN's probability distribution over different classes as a natural indicator of its preference strength. For each node $v_i \in \mathcal{V}_{\text{disagreed}}$, we compute a preference score measuring how strongly the GNN favors its own prediction over the LLM's prediction: $S_{\text{pref}}(v_i) = P_{\text{GNN}}(\hat{y}_i^{\text{GNN}} \mid v_i) - P_{\text{GNN}}(\hat{y}_i^{\text{LLM}} \mid v_i)$, where $P_{\text{GNN}}(\hat{y}_i^{\text{GNN}} \mid v_i)$ is the GNN's predicted probability for its top class, and $P_{\text{GNN}}(\hat{y}_i^{\text{LLM}} \mid v_i)$ is the probability assigned to the LLM's predicted class. A larger preference score indicates a stronger conviction by the GNN in its own prediction relative to the LLM's alternative. We then select the final subset of nodes, $\mathcal{V}_{\text{disagreed}}'$, by retaining only those where the GNN's preference score exceeds a predefined confidence threshold $\tau$: $ \mathcal{V}_{\text{disagreed}}' = \{ v_i \in \mathcal{V}_{\text{disagreed}} \mid S_{\text{pref}}(v_i) \geq \tau \}$.

\begin{tcolorbox}[
  enhanced,                     
  colback=gray!5!white,         
  colframe=gray!60!black,       
  boxrule=0.8pt,                
  arc=3mm,                      
  boxsep=3pt,                   
  left=6pt, right=6pt,          
  top=4pt, bottom=4pt,          
  attach boxed title to top left={yshift=-2mm, xshift=5mm}, 
  boxed title style={
    colback=gray!60!black,      
    colframe=gray!60!black,     
    sharp corners                
  },
  before skip=6pt,              
  after skip=6pt,               
]
\textit{\textbf{Remark}}: Directly utilizing LLMs to evaluate ``easiness'' or ``hardness'' of unlabeled nodes on TAGs is non-trivial. Our strategy utilize the GNN as a pseudo label judge to effectively identify ``easy'' and ``hard'' samples, providing a practical method to identify reliable pseudo labels. 
\end{tcolorbox}

\subsection{LLM Weakly-Supervised Fine-Tuning on Graphs with GNN Feedback}
\textbf{Weakly-Supervised Fine-tuning Algorithm.} Based on our selected pseudo-labeled nodes, we propose a weakly-supervised fine-tuning algorithm that fine-tunes LLMs on graphs using a unified objective. Our approach integrates both instruction tuning and preference tuning into a single training framework:
\begin{equation}
\mathcal{L}(\theta) = \mathbb{E}_{(x_i, y_i) \sim \mathcal{D}_{\text{agreed}}} [\mathcal{L}_{\text{IT}}(\theta; x_i, y_i)]
+ \lambda\mathbb{E}_{(x_i, y_{w,i}, y_{l,i}) \sim \mathcal{D}_{\text{disagreed}'}} [\mathcal{L}_{\text{PT}}(\theta; x_i, y_{w,i}, y_{l,i})]
\label{eq:overall_loss}
\end{equation}
where $\mathcal{D}_{\text{agreed}}$ represents the data distribution over the agreement node set $\mathcal{V}_{\text{agreed}}$, $\mathcal{D}_{\text{disagreed}'}$ represents the data distribution over the selected disagreement node set $\mathcal{V}_{\text{disagreed}}'$, and $\lambda$ controls the contribution of the preference tuning loss. For each selected node $v_i$, we construct the input $x_i$ as the node's textual content formatted with task-specific prompts (detailed prompt templates are provided in Appendix~\ref{sec:prompts}). For nodes in $\mathcal{V}_{\text{agreed}}$, $y_i$ represents the consensus prediction where both GNN and LLM agree. For nodes in $\mathcal{D}_{\text{disagreed}'}$, $y_{w,i}$ and $y_{l,i}$ represent the GNN's prediction (preferred) and LLM's prediction (dispreferred) respectively.

\textbf{LLM Instruction Tuning with LLM-GNN Agreement.}
For nodes in the agreement set $\mathcal{V}_{\text{agreed}}$, we apply instruction tuning \citep{ouyang2022training} to reinforce correct predictions. Given an input node text feature $x_i$ and the agreed pseudo label $y_i$, the instruction tuning loss is defined as:
\begin{equation}
\mathcal{L}_{\text{IT}}(\theta; x_i, y_i) = - \log p_{\theta}(y_i|x_i),
\end{equation}
where $p_{\theta}(y_i|x_i)$ represents the LLM's probability of generating the label $y_i$ given the node text $x_i$, and $\theta$ denotes the model parameters.


\textbf{LLM Preference Tuning with LLM-GNN Disagreement.} Directly fine-tuning the LLM on pseudo-labels from the disagreement set via standard instruction tuning is problematic because the disagreement node set contains potentially more label noise compared to the agreement node set. To address this, we reframe the problem on the disagreement set using a preference tuning objective. For each node $v_i \in \mathcal{V}_{\text{disagreed}}'$, we construct a preference pair where the GNN's prediction $y_{w,i}$ serves as the preferred response and the LLM's prediction $y_{l,i}$ serves as the dispreferred response. This formulation enables the model to learn from the relative relationship between competing outputs without requiring absolute correctness from either prediction. The preference tuning objective is formulated as:
\begin{equation}
\mathcal{L}_{\text{PT}}(\theta; x_i, y_{w,i}, y_{l,i}) = -\log \sigma(g_\theta(x_i, y_{w,i}, y_{l,i}))
\end{equation}
where $x_i$ represents the input prompt for node $v_i$, $y_{w,i}$ and $y_{l,i}$ are the preferred and dispreferred outputs respectively, and $g_\theta(\cdot)$ is a preference function that scores the relative preference between the two outputs. In our implementation, we adopt \textit{Odds Ratio Preference Optimization (ORPO)} \citep{hong2024orpo}, which uses the log odds ratio as the preference function:
\begin{equation}
g_\theta(x, y_{w}, y_{l}) = \log \frac{\text{odds}_\theta(y_{w}\mid x)}{\text{odds}_\theta(y_{l}\mid x)}
\label{eq:orpo_function}
\end{equation}
where $\text{odds}_\theta(y \mid x) = P_\theta(y \mid x) / (1 - P_\theta(y \mid x))$.
By minimizing this loss over $\mathcal{V}_{\text{disagreed}}'$, the LLM learns to increase the relative likelihood of the GNN's predictions compared to its own initial predictions. This approach mitigates the risk of overfitting to noisy pseudo-labels while still leveraging the valuable disagreement signal to improve model performance. 


\begin{tcolorbox}[
  enhanced,                     
  colback=gray!5!white,         
  colframe=gray!60!black,       
  boxrule=0.8pt,                
  arc=3mm,                      
  boxsep=3pt,                   
  left=6pt, right=6pt,          
  top=3pt, bottom=3pt,          
  fonttitle=\normalsize\bfseries,
  coltitle=white,
  colbacktitle=gray!60!black,   
  attach title to upper=false,  
  before skip=6pt,              
  after skip=6pt,               
]
\textit{\textbf{Remark}}: Our framework can be considered as an LLM preference alignment framework by replacing human feedback with signals derived from GNNs. While we implement preference tuning using ORPO \citep{hong2024orpo}, our approach is also compatible with other methods like such as DPO \citep{rafailov2023direct}, SimPO \citep{meng2024simpo}, and other variants. 
\end{tcolorbox}


\section{Experiments}
\label{sec:experiments}
\newcommand{\OURS}{\textbf{GNN-as-Judge}}
\subsection{Experimental Setup}
\textbf{Datasets.}
We train and evaluate our framework on four widely-used benchmark datasets for few-shot semi-supervised node classification: \texttt{Cora}~\citep{yang2016revisiting}, \texttt{Citeseer}~\citep{sen2008collective}, \texttt{Pubmed}~\citep{sen2008collective}, \texttt{ogbn-arxiv} and \texttt{ogbn-products}~\citep{namata2012query}. For \texttt{Cora}, \texttt{Citeseer} and \texttt{Pubmed}, which are three most widely used citation networks, we follow the experimental setup of previous work~\cite{gasteiger2018predict} and split each dataset into training (i.e., $K$ nodes per class for $K$-shot task), validation set, and test set. To evaluate performance on large-scale graphs, we also include \texttt{ogbn-arxiv} and \texttt{ogbn-products} from the Open Graph Benchmark (OGB)~\citep{hu2020open}. Detailed statistics of all datasets are summarized in Appendix~\ref{sec:dataset_stats}.

\textbf{Baselines and Implementation Details.}
To evaluate the effectiveness of our proposed framework, we compare it against baseline methods from three primary categories:  (1) \textit{Classical GNN Models}: We include established graph neural network models such as GCN~\citep{kipf2016semi} and SGC~\citep{wu2019simplifying}; (2) \textit{LLM-as-Predictors}: We also include baselines that utilize general-purpose LLMs with various prompting strategies such as zero-shot, chain-of-thought, and neighbor-augmented prompting~\citep{wei2022chain,chen2024exploring}; (3) \textit{LLM-Graph Methods}: Aligning with our focus, we benchmark against various state-of-the-art LLM-Graph methods, including GLEM~\citep{zhao2023learning}, TAPE~\citep{he2023harnessing}, LLM-GNN~\citep{chen2023label}, LLaGA~\citep{chen2024llaga} and GraphGPT~\citep{tang2024graphgpt}. We use GCN~\citep{kipf2016semi} and Llama-3-8B-Instruct~\citep{grattafiori2024llama} as backbone models of our approach. Further descriptions and implementation details on all methods are provided in the Appendix~\ref{sec:baselines} and~\ref{sec:implementations}.

\vspace{-7pt}
\subsection{Few-Shot Semi-Supervised Node Classification}

\begin{table}[h]
\centering
\vspace{-10pt}
\caption{Node classification accuracy (\%) across different datasets and shot settings. Results show mean $\pm$ standard deviation performance. \textbf{Best} results are bolded, while \underline{second-best} results are underlined.}
\vspace{-3pt}
\label{tab:shot_comparison}
\setlength{\tabcolsep}{9.6pt}
\scriptsize
\begin{tabular}{c|l|ccccc}
\toprule
\textbf{Shot} & \textbf{Method} & \textbf{Cora} & \textbf{Citeseer} & \textbf{Pubmed} & \textbf{ogbn-arxiv} & \textbf{ogbn-products} \\
\midrule
\multirow{11}{*}{\rotatebox{90}{\textbf{3-shot}}} 
& GCN & 69.45{\scriptsize$\pm$2.34} & 63.12{\scriptsize$\pm$1.02} & 65.23{\scriptsize$\pm$1.67} & 38.33{\scriptsize$\pm$2.41} & 59.19{\scriptsize$\pm$0.79} \\
& SGC & 67.21{\scriptsize$\pm$1.25} & 63.07{\scriptsize$\pm$0.95} & 65.34{\scriptsize$\pm$1.18} & 39.16{\scriptsize$\pm$1.49} & 57.42{\scriptsize$\pm$1.12} \\
\cmidrule{2-7}
& Zero-Shot & 65.54{\scriptsize$\pm$0.26} & 58.17{\scriptsize$\pm$0.64} & 74.51{\scriptsize$\pm$0.39} & \underline{50.18{\scriptsize$\pm$1.44}} & 75.48{\scriptsize$\pm$1.54} \\
& Graph-CoT & 63.02{\scriptsize$\pm$0.77} & 47.23{\scriptsize$\pm$1.06} & \underline{86.22{\scriptsize$\pm$2.47}} & 49.67{\scriptsize$\pm$1.23} & 74.15{\scriptsize$\pm$1.83} \\
& w. Neighbor & 68.72{\scriptsize$\pm$1.56} & 54.93{\scriptsize$\pm$1.28} & 74.98{\scriptsize$\pm$3.16} & 49.28{\scriptsize$\pm$2.09} & \underline{76.88{\scriptsize$\pm$1.07}} \\
\cmidrule{2-7}
& GLEM & 67.81{\scriptsize$\pm$0.92} & 53.09{\scriptsize$\pm$1.39} & 63.85{\scriptsize$\pm$1.02} & 36.37{\scriptsize$\pm$1.96} & 52.46{\scriptsize$\pm$1.93} \\
& TAPE & \underline{73.71{\scriptsize$\pm$1.86}} & \underline{64.96{\scriptsize$\pm$0.36}} & 71.33{\scriptsize$\pm$0.87} & 48.25{\scriptsize$\pm$0.77} & 69.64{\scriptsize$\pm$1.15} \\
& LLM-GNN & 73.85{\scriptsize$\pm$0.74} & 61.67{\scriptsize$\pm$0.58} & 66.01{\scriptsize$\pm$0.63} & 42.36{\scriptsize$\pm$1.72} & 55.46{\scriptsize$\pm$0.37} \\
& LLaGA & 54.79{\scriptsize$\pm$0.91} & 32.93{\scriptsize$\pm$1.24} & 43.96{\scriptsize$\pm$1.74} & 29.73{\scriptsize$\pm$2.82} & 30.67{\scriptsize$\pm$1.54} \\
& GraphGPT & 57.77{\scriptsize$\pm$1.62} & 52.34{\scriptsize$\pm$1.06} & 57.51{\scriptsize$\pm$1.89} & 31.26{\scriptsize$\pm$2.73} & 40.83{\scriptsize$\pm$2.97} \\
\rowcolor[HTML]{EFEFEF}
& \OURS & \textbf{77.89{\scriptsize$\pm$1.28}} & \textbf{73.59{\scriptsize$\pm$0.64}} & \textbf{87.12{\scriptsize$\pm$0.89}} & \textbf{62.21{\scriptsize$\pm$1.45}} & \textbf{81.02{\scriptsize$\pm$1.23}} \\
\midrule
\multirow{11}{*}{\rotatebox{90}{\textbf{5-shot}}} 
& GCN & 73.69{\scriptsize$\pm$1.03} & 63.24{\scriptsize$\pm$0.87} & 70.58{\scriptsize$\pm$0.49} & 45.67{\scriptsize$\pm$0.41} & 68.26{\scriptsize$\pm$1.54} \\
& SGC & 72.48{\scriptsize$\pm$0.35} & 62.08{\scriptsize$\pm$0.77} & 70.74{\scriptsize$\pm$0.94} & 49.89{\scriptsize$\pm$1.23} & 66.78{\scriptsize$\pm$1.56} \\
\cmidrule{2-7}
& Zero-Shot & 65.54{\scriptsize$\pm$0.26} & 58.17{\scriptsize$\pm$0.64} & 74.51{\scriptsize$\pm$0.39} & 50.18{\scriptsize$\pm$1.44} & 75.48{\scriptsize$\pm$1.54} \\
& Graph-CoT & 63.02{\scriptsize$\pm$0.77} & 47.23{\scriptsize$\pm$1.06} & \underline{86.22{\scriptsize$\pm$2.47}} & 49.67{\scriptsize$\pm$1.23} & 74.15{\scriptsize$\pm$1.83} \\
& w. Neighbor & 68.72{\scriptsize$\pm$1.56} & 54.93{\scriptsize$\pm$1.28} & 74.98{\scriptsize$\pm$3.16} & 49.28{\scriptsize$\pm$2.09} & 76.88{\scriptsize$\pm$1.07} \\
\cmidrule{2-7}
& GLEM & 74.69{\scriptsize$\pm$0.46} & 61.71{\scriptsize$\pm$0.58} & 73.29{\scriptsize$\pm$1.44} & 39.19{\scriptsize$\pm$1.57} & 56.87{\scriptsize$\pm$1.70} \\
& TAPE & 74.28{\scriptsize$\pm$0.81} & \underline{67.73{\scriptsize$\pm$0.48}} & 75.02{\scriptsize$\pm$0.83} & \underline{55.22{\scriptsize$\pm$0.48}} & \underline{77.44{\scriptsize$\pm$1.32}} \\
& LLM-GNN & \underline{75.61{\scriptsize$\pm$1.04}} & 62.37{\scriptsize$\pm$1.35} & 74.33{\scriptsize$\pm$0.95} & 45.74{\scriptsize$\pm$1.66} & 64.01{\scriptsize$\pm$0.39} \\
& LLaGA & 62.88{\scriptsize$\pm$2.19} & 43.71{\scriptsize$\pm$4.36} & 58.63{\scriptsize$\pm$1.05} & 33.74{\scriptsize$\pm$2.45} & 37.29{\scriptsize$\pm$3.12} \\
& GraphGPT & 60.17{\scriptsize$\pm$1.44} & 51.83{\scriptsize$\pm$2.24} & 57.39{\scriptsize$\pm$3.67} & 36.25{\scriptsize$\pm$1.87} & 44.78{\scriptsize$\pm$2.34} \\
\rowcolor[HTML]{EFEFEF}
& \OURS & \textbf{79.54{\scriptsize$\pm$0.39}} & \textbf{74.39{\scriptsize$\pm$1.63}} & \textbf{87.49{\scriptsize$\pm$1.23}} & \textbf{66.76{\scriptsize$\pm$0.83}} & \textbf{81.93{\scriptsize$\pm$2.21}} \\
\midrule
\multirow{11}{*}{\rotatebox{90}{\textbf{10-shot}}} 
& GCN & 78.22{\scriptsize$\pm$0.89} & 68.38{\scriptsize$\pm$1.49} & 75.33{\scriptsize$\pm$0.94} & 50.95{\scriptsize$\pm$1.77} & 69.65{\scriptsize$\pm$0.89} \\
& SGC & 78.49{\scriptsize$\pm$0.37} & 67.44{\scriptsize$\pm$0.60} & 74.98{\scriptsize$\pm$1.91} & 51.89{\scriptsize$\pm$1.23} & 67.91{\scriptsize$\pm$0.48} \\
\cmidrule{2-7}
& Zero-Shot & 65.54{\scriptsize$\pm$0.26} & 58.17{\scriptsize$\pm$0.64} & 74.51{\scriptsize$\pm$0.39} & 50.18{\scriptsize$\pm$1.44} & 75.48{\scriptsize$\pm$1.54} \\
& Graph-CoT & 63.02{\scriptsize$\pm$0.77} & 47.23{\scriptsize$\pm$1.06} & \underline{86.22{\scriptsize$\pm$2.47}} & 49.67{\scriptsize$\pm$1.23} & 74.15{\scriptsize$\pm$1.83} \\
& w. Neighbor & 68.72{\scriptsize$\pm$1.56} & 54.93{\scriptsize$\pm$1.28} & 74.98{\scriptsize$\pm$3.16} & 49.28{\scriptsize$\pm$2.09} & 76.88{\scriptsize$\pm$1.07} \\
\cmidrule{2-7}
& GLEM & 78.11{\scriptsize$\pm$0.73} & 66.83{\scriptsize$\pm$0.61} & 74.17{\scriptsize$\pm$2.39} & 47.73{\scriptsize$\pm$1.09} & 60.22{\scriptsize$\pm$1.89} \\
& TAPE & 79.33{\scriptsize$\pm$0.57} & \underline{69.39{\scriptsize$\pm$0.65}} & 77.18{\scriptsize$\pm$1.06} & \underline{60.37{\scriptsize$\pm$0.92}} & \underline{79.53{\scriptsize$\pm$0.63}} \\
& LLM-GNN & \underline{79.39{\scriptsize$\pm$1.26}} & 66.28{\scriptsize$\pm$0.94} & 76.82{\scriptsize$\pm$0.57} & 52.74{\scriptsize$\pm$0.48} & 66.98{\scriptsize$\pm$0.39} \\
& LLaGA & 69.25{\scriptsize$\pm$0.97} & 51.22{\scriptsize$\pm$1.43} & 67.29{\scriptsize$\pm$2.26} & 45.35{\scriptsize$\pm$1.74} & 40.55{\scriptsize$\pm$1.63} \\
& GraphGPT & 61.58{\scriptsize$\pm$0.77} & 55.40{\scriptsize$\pm$3.16} & 71.33{\scriptsize$\pm$2.81} & 48.67{\scriptsize$\pm$1.89} & 51.46{\scriptsize$\pm$1.05} \\
\rowcolor[HTML]{EFEFEF}
& \OURS & \textbf{80.71{\scriptsize$\pm$0.83}} & \textbf{74.62{\scriptsize$\pm$1.35}} & \textbf{90.17{\scriptsize$\pm$1.69}} & \textbf{67.88{\scriptsize$\pm$1.03}} & \textbf{82.48{\scriptsize$\pm$1.56}} \\
\bottomrule
\end{tabular}
\end{table} Table~\ref{tab:shot_comparison} presents the comparative performance of our approach against various baseline methods across multiple datasets and experimental settings. Our results demonstrate three key observations. \underline{Observation 1:} Our proposed \textit{GNN-as-Judge} consistently outperforms all baseline methods across all datasets and experimental settings, demonstrating superior robustness. \underline{Observation 2:} Our method shows particularly strong performance in low-resource settings, highlighting its effectiveness in real-world applications. In extreme scenarios such as 3-shot and 5-shot settings, \textit{GNN-as-Judge} maintains substantial performance advantages, making it highly practical for domains where labeled data is scarce or expensive to obtain. \underline{Observation 3:} In low-resource settings, GNN-based methods consistently show strong performance compared to LLM-based approaches. Traditional GNN architectures can effectively exploit graph structure and connectivity patterns even with limited labeled data, while LLM-based methods struggle due to insufficient textual context and unreliable pseudo label generation in these constrained scenarios.
\subsection{Cross-Dataset Zero-Shot Node Classification}
In this section, we evaluate the zero-shot generalization capabilities of \textit{GNN-as-Judge}. We compared with various \textit{LLMs-as-Predictors} graph learning models on the \texttt{ogbn-arxiv} dataset and evaluated their zero-shot performance on \texttt{Cora}, \texttt{Citeseer}, and \texttt{Pubmed} without additional fine-tuning. 
\begin{wraptable}{r}{0.34\textwidth} 
\vspace{-10pt}
\centering 
\caption{Zero-Shot Cross-Dataset Node Classification.}
\label{tab:zero_shot_results}
\setlength{\tabcolsep}{4.2pt}
\scriptsize
\begin{tabular}{@{}ccc@{}}
\toprule
\textbf{Train} $\rightarrow$ \textbf{Test} & \textbf{Model} & \textbf{Accuracy} \\
\midrule
\multirow{3}{*}{\shortstack{\textbf{ogbn-arxiv} \\ $\downarrow$ \\ \textbf{Cora}}} 
& LLAGA & 16.24{\scriptsize$\pm$0.95} \\
& GraphGPT & 6.29{\scriptsize$\pm$0.73} \\
& \textbf{GNN-as-Judge} & \textbf{68.27\scriptsize$\pm$0.91} \\
\midrule
\multirow{3}{*}{\shortstack{\textbf{ogbn-arxiv} \\ $\downarrow$ \\ \textbf{Citeseer}}} 
& LLAGA & 14.72{\scriptsize$\pm$1.12} \\
& GraphGPT & 5.37{\scriptsize$\pm$0.84} \\
& \textbf{GNN-as-Judge} & \textbf{56.67\scriptsize$\pm$0.89} \\
\midrule
\multirow{3}{*}{\shortstack{\textbf{ogbn-arxiv} \\ $\downarrow$ \\ \textbf{Pubmed}}} 
& LLAGA & 30.52{\scriptsize$\pm$1.18} \\
& GraphGPT & 10.54{\scriptsize$\pm$1.05} \\
& \textbf{GNN-as-Judge} & \textbf{83.41\scriptsize$\pm$0.76} \\
\bottomrule
\end{tabular}
\vspace{-\intextsep} 
\end{wraptable}
Unlike traditional GNNs that require task-specific classification heads, \textit{LLMs-as-Predictors} methods can perform zero-shot learning across different label sets. As shown in Table~\ref{tab:zero_shot_results}, \textit{GNN-as-Judge} demonstrates superior zero-shot transfer performance across all datasets. It substantially outperforms GraphGPT and LLaGA, which struggle with cross-dataset generalization. Their approach of encoding graph structure into tokens appears to constrain the LLM's generalization capabilities, resulting in performance worse than untuned base LLMs.  These results indicate that \textit{GNN-as-Judge} is more robust to distribution shifts between graph datasets and better preserves the LLM's inherent generalization capabilities while incorporating graph-based insights. This makes our approach particularly valuable for practical applications where labeled data may
be scarce or available for domains, necessitating models that can effectively generalize to new, unseen graph structures.

\subsection{Analysis on Pseudo Label Selection}
To evaluate the effectiveness of our influence-guided node selection strategy, we conduct a comprehensive analysis comparing different pseudo label selection approaches. Figure~\ref{fig:pseudo_label} illustrates the accuracy of 1,500 pseudo labels selected using various methods across three representative datasets under 3-shot setting and then annotated by GCN~\citep{kipf2016semi}. \begin{wrapfigure}[11]{l}{0.38\textwidth}
    \centering
    \vspace{-10pt}\includegraphics[width=0.38\textwidth]{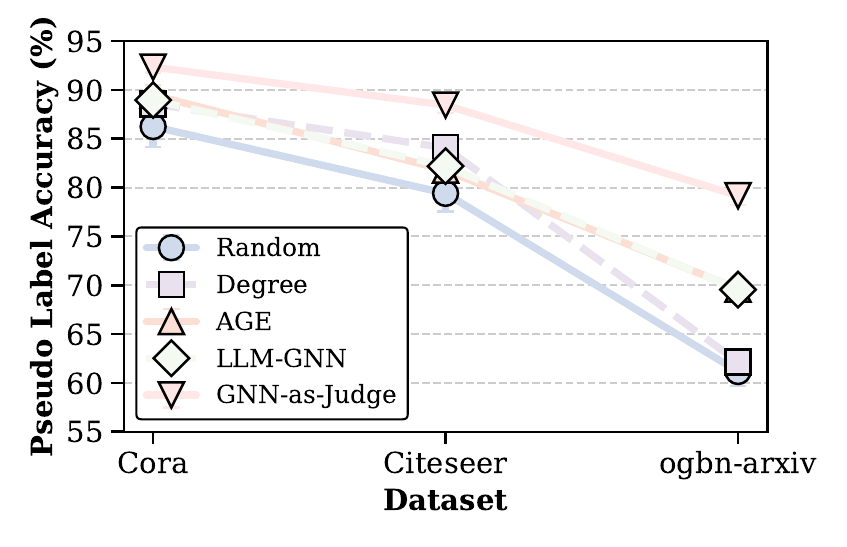}
    \vspace{-25pt}
    \caption{Comparison of pseudo label selection strategies across datasets.}
    \label{fig:pseudo_label}
\end{wrapfigure}
We apply the same \textit{GNN-as-Judge} filtering process after initial node selection. We analyze our \textit{GNN-as-Judge} approach alongside several alternative selection strategies: (1) \textit{Random}: Randomly selecting pseudo labels; (2) \textit{Degree}~\citep{page1999pagerank}: Selecting nodes based on their degree centrality in the graph; 
(3) \textit{AGE}~\citep{cai2017active}: Using graph embedding to identify representative nodes; and (4) \textit{LLM-GNN}~\citep{chen2023label}: Specifically, we use the proposed difficulty-aware strategy combined with \textit{AGE}~\citep{cai2017active}. As shown in Figure~\ref{fig:pseudo_label}, our influence-guided selection consistently outperforms all baseline approaches across datasets, achieving the highest pseudo label accuracy after applying the same filtering process. The results highlight that structural influence from labeled nodes provides a more effective criterion for pseudo label selection compared to simple graph topology measures. More detailed quantitative analysis of pseudo label selection is provided in Appendix~\ref{sec:pseudo_label_analysis}.

\subsection{Ablation Study}
In this section, we conduct an ablation study to analyze the contribution of key modules from our proposed \textit{GNN-as-Judge}. 
\begin{figure}[htbp]
    \centering
    \vspace{-10pt}
    \includegraphics[width=\textwidth]{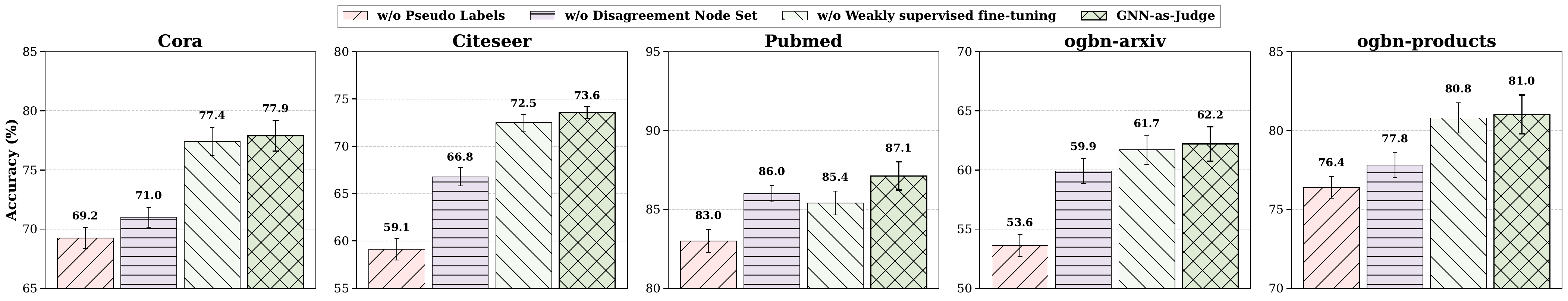}
    \caption{Ablation study demonstrating the contribution of each component in our framework.}     
    \label{fig:ablation_study_subplots}
    \vspace{-12pt}
\end{figure}
\begin{itemize}
[leftmargin=*,itemsep=0.2pt,topsep=1.pt]
    \vspace{-5pt}
    \item \textbf{Effectiveness of Pseudo Label.}  We first examine the contribution of pseudo labels by evaluating the \textit{w/o Pseudo Labels} variant that only performs supervised fine-tuning on the labeled set without incorporating pseudo-labeled nodes. As shown in Figure~\ref{fig:ablation_study_subplots}, removing pseudo labels leads to significant performance degradation across all datasets. This demonstrates that our pseudo-labeling strategy effectively expands the training set with high-quality pseudo labels.
    \item \textbf{Effectiveness of Disagreement Node Set.} We then evaluate the importance of the disagreement node set through the \textit{w/o Disagreement Node Set} variant by excluding these challenging examples from our training data. Removing the disagreement node set leads to significant performance degradation. These nodes, identified through prediction disagreements between different model variants, represent the most informative hard examples that can provide additional learning signals.
    \item \textbf{Effectiveness of Weakly-Supervised Fine-Tuning.} Finally, we assess the contribution of our proposed weakly supervised fine-tuning strategy through the \textit{w/o Weakly supervised fine-tuning} variant by replacing it with standard instruction tuning. Our full \textit{GNN-as-Judge} approach consistently outperforms standard instruction tuning on selected nodes with particularly notable gains on \texttt{Pubmed} due to potentially more label noise in the disagreement set. Unlike standard instruction tuning on pseudo-labeled data, our approach is better equipped to handle the inherent uncertainty or potential noise present in the pseudo labels of the selected nodes.
\end{itemize}

\subsection{Sensitivity and Training Time Analysis}
In this section, we evaluate the framework's sensitivity to different hyper-parameters and analyze the computational efficiency. Additional experiments and analysis are provided in Appendix~\ref{sec:additional}.

\textbf{Sensitivity Analysis.}
We investigate the impact of two critical hyperparameters in \textit{GNN-as-Judge}: the top-$K$ selection for unlabeled nodes in pseudo-labeling and the preference score threshold $\tau$ for disagreement node filtering. This analysis is essential for understanding the robustness and the scalability of our approach. 

\begin{wrapfigure}[11]{r}{0.61\textwidth}
    \centering
    \vspace{-30pt}
    \includegraphics[width=0.61\textwidth]{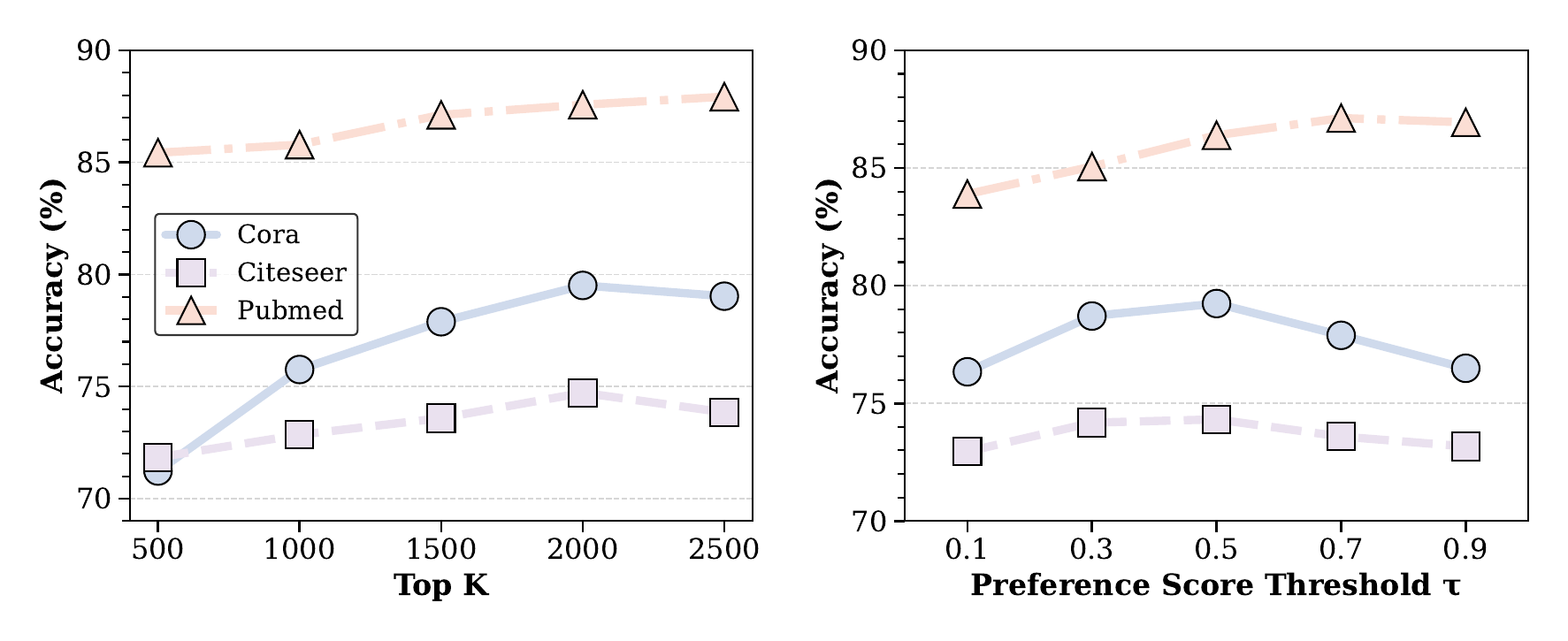}
    \vspace{-20pt}
    \caption{Sensitivity analysis of hyperparameters: top-$K$ unlabeled nodes (left) and preference score threshold $\tau$ (right) across three benchmark datasets.}
    \label{fig:k_sensitivity}
\end{wrapfigure}

For the top-$K$ analysis, we vary the number of selected unlabeled nodes within the range $K \in \{500, 1000, 1500, 2000, 2500\}$. Figure~\ref{fig:k_sensitivity} demonstrates that \textit{GNN-as-Judge} exhibits scalability properties, with performance generally improving as more unlabeled nodes are incorporated into the pseudo-labeling process and then used for training. However, the performance increase is moderate as more data points are incorporated, suggesting diminishing returns beyond a certain threshold while maintaining computational efficiency. 

\begin{wrapfigure}[10]{r}{0.32\textwidth}
    \centering
    \vspace{-28pt}
    \includegraphics[width=0.32\textwidth]{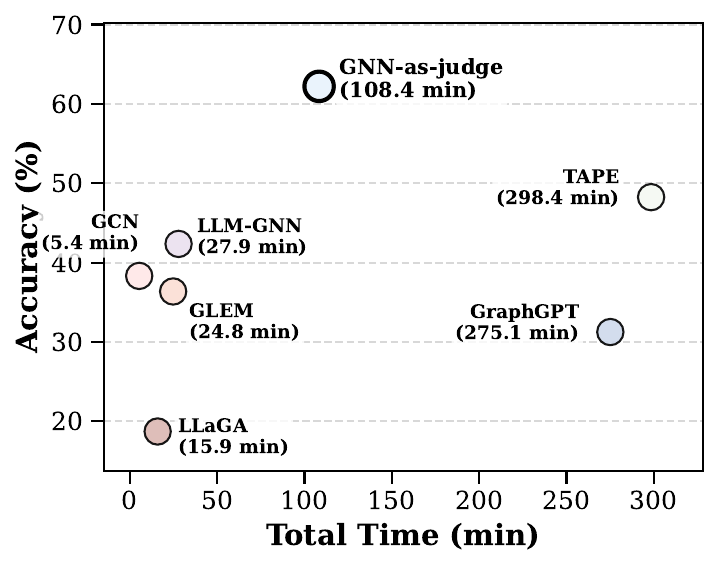}
    \vspace{-20pt}
    \caption{Training time versus accuracy for all methods.}
    \label{fig:training_time_analysis}
\end{wrapfigure}

For the preference score threshold $\tau$, we examine values in the range $\tau \in \{0.1, 0.3, 0.5, 0.7, 0.9\}$ to understand how preference score filtering affects model performance. Notably, the performance variations are relatively small when $\tau$ ranges from 0.5 to 0.9. This robustness is particularly valuable for practical applications where precise hyperparameter tuning may be challenging.

\textbf{Training Time Analysis.}
Figure~\ref{fig:training_time_analysis} presents a comprehensive comparison of the computational efficiency and accuracy trade-offs across different methods. The scatter plot illustrates the relationship between total training time on a single NVIDIA A100 80GB GPU for the entire pipeline and achieved accuracy for each approach under 3-shot settings for the \texttt{ogbn-arxiv} dataset. The usage of LLMs extends the training time, but our method gains substantial accuracy improvements that justify the computational overhead.

\section{Conclusion}
In this paper, we present \textit{GNN-as-Judge}, a novel framework that addresses the challenge of applying LLMs to few-shot semi-supervised graph learning. Our approach leverages complementary strengths from both GNNs and LLMs through three key mechanisms: (i) a subset selection strategy that identifies nodes with most information from labeled nodes for pseudo labeling; (ii) a strategic pseudo-label selection that identifies both reliable and challenging nodes, and (iii) a weakly-supervised fine-tuning strategy combining instruction tuning with preference tuning. Experiments across multiple datasets demonstrate that \textit{GNN-as-Judge} consistently outperforms both GNN architectures and LLM-based methods, particularly in low-resource settings. 
\section*{Reproducibility Statement}
To ensure reproducibility of our results, we provide our complete source code at \url{https://github.com/rux001/GNN-as-Judge}. Complete implementation details are provided in Appendix~\ref{sec:implementations}.

\bibliography{iclr2026_conference}
\bibliographystyle{iclr2026_conference}

\newpage
\appendix
\begin{center}
\section*{Appendix}
\end{center}
\addcontentsline{toc}{section}{Appendix}

This Appendix provides comprehensive supplementary material for the submitted paper including detailed information across the following sections:

\begin{itemize}[leftmargin=*]
    \item \textbf{Section~\ref{sec:dataset_stats}:} Dataset statistics and descriptions including comprehensive analysis of evaluation datasets with node/edge counts, feature dimensions, and label space characterization.
    
    \item \textbf{Section~\ref{sec:baselines}:} Comprehensive baseline method descriptions covering Classical GNN Models, LLM-as-Predictors approaches, and state-of-the-art LLM-Graph integration methods.
    
    \item \textbf{Section~\ref{sec:implementations}:} Implementation details for our approach and all baselines, including hyperparameter settings, training configurations, and computational environment specifications for reproducibility.

    
    \item \textbf{Section~\ref{sec:proof}:} Theoretical analysis with complete proofs for Theorem~\ref{thm:node_influence_decay} and Theorem~\ref{thm:agreement_superiority}.
    
    \item \textbf{Section~\ref{sec:prompts}:} Detailed prompt templates used for all LLM-based methods including our \textit{GNN-as-Judge} framework and baseline approaches.
    
    \item \textbf{Section~\ref{sec:pseudo_label_analysis}:} Quantitative analysis of pseudo-label selection including agreement patterns, accuracy distributions, diversity metrics, and selection strategy comparisons.
    
    \item \textbf{Section~\ref{sec:additional}:} Additional experimental results covering sensitivity analysis, integration with various LLMs, performance on heterophilic graphs.
    
    \item \textbf{Section~\ref{sec:limitaions}:} Discussion of limitations and potential future research directions for the \textit{GNN-as-Judge} framework.
    
    
    \item \textbf{Section~\ref{sec:extemded_related_works}:} Extended related work.
    
    \item \textbf{Section~\ref{sec:use_llms}:} Disclosure of Large Language Model usage in manuscript preparation for transparency and reproducibility standards.
\end{itemize}

\section{Dataset}
\label{sec:dataset_stats}
\subsection{Dataset Statistics}
\begin{table}[h]
\centering
\caption{Summary statistics of the evaluation datasets.}
\label{tab:dataset_stats}
\begin{tabular}{lcccc}
\toprule
\textbf{Dataset} & \textbf{Nodes} & \textbf{Edges} & \textbf{Features} & \textbf{Classes} \\
\midrule
\texttt{Cora} & 2,708 & 5,429 & 1,433 & 7 \\
\texttt{Citeseer} & 3,327 & 4,732 & 3,703 & 6 \\
\texttt{Pubmed} & 19,717 & 44,338 & 500 & 3 \\
\texttt{ogbn-arxiv} & 169,343 & 1,166,243 & 128 & 40 \\
\texttt{ogbn-products}(subset) & 54,025 & 74,420 & 100 & 47 \\
\bottomrule
\end{tabular}
\end{table}
Table~\ref{tab:dataset_stats} presents the detailed statistics of the datasets used in our experiments, including the number of nodes, edges, features, and classes for each dataset.


\subsection{Dataset Descriptions}

\textbf{Cora}~\citep{yang2016revisiting}. The \texttt{Cora} dataset comprises 2,708 scientific publications classified into one of seven machine learning research categories. The citation network consists of 5,429 links, where papers were selected such that every paper citeps or is citepd by at least one other paper in the final corpus. 

\textbf{citeseer}~\citep{sen2008collective}. The \texttt{citeseer} dataset contains 3,327 computer science papers classified into 6 categories across different CS research areas. The citation network includes 4,732 links, with each paper represented by 3,703-dimensional features derived from bag-of-words representation of the paper content.

\textbf{Pubmed}~\citep{sen2008collective}. The \texttt{Pubmed} dataset consists of 19,717 scientific publications from the PubMed database pertaining to diabetes research, classified into one of three diabetes-related categories. The citation network consists of 44,338 links, with each paper represented by 500-dimensional features extracted from the paper abstracts.

\textbf{ogbn-arxiv}~\citep{hu2020open}. The \texttt{ogbn-arxiv} dataset is a directed graph representing the citation network between computer science arXiv papers indexed by Microsoft Academic Graph (MAG). Each node represents an arXiv paper, and each directed edge indicates that one paper citeps another. 

\textbf{ogbn-products}~\citep{hu2020open}. The \texttt{ogbn-products} dataset represents an Amazon product co-purchasing network with product descriptions as raw text. Nodes represent products sold on Amazon, and edges between products indicate co-purchasing relationships. We use a subset of 54,025 products for computational efficiency following previous work~\citep{he2023harnessing} while maintaining the dataset's key characteristics.

\subsection{Label Space}
Within each dataset, nodes are labeled according to their academic category, as detailed in Table~\ref{tab:dataset_labels}. For example, the arXiv dataset includes 40 computer science sub-categories such as cs.AI (Artificial Intelligence) and cs.DB (Databases).
\begin{table}[h]
\centering
\caption{Label spaces of the datasets used in our experiments.}
\label{tab:dataset_labels}
\begin{tabular}{p{2cm}p{11cm}}
\toprule
\textbf{Dataset} & \textbf{Label Space} \\
\midrule
\texttt{Cora} & Rule Learning, Neural Networks, Case Based, Genetic Algorithms, Theory, Reinforcement Learning, Probabilistic Methods \\
\midrule
\texttt{citeseer} & Agents, ML (Machine Learning), IR (Information Retrieval), DB (Databases), HCI (Human-Computer Interaction), AI (Artificial Intelligence) \\
\midrule
\texttt{Pubmed} & Experimentally induced diabetes, Type 1 diabetes, Type 2 diabetes \\
\midrule
\texttt{ogbn-arxiv} & cs.NA, cs.MM, cs.LO, cs.CY, cs.CR, cs.DC, cs.HC, cs.CE, cs.NI, cs.CC, cs.AI, cs.MA, cs.GL, cs.NE, cs.SC, cs.AR, cs.CV, cs.GR, cs.ET, cs.SY, cs.CG, cs.OH, cs.PL, cs.SE, cs.LG, cs.SD, cs.SI, cs.RO, cs.IT, cs.PF, cs.CL, cs.IR, cs.MS, cs.FL, cs.DS, cs.OS, cs.GT, cs.DB, cs.DL, cs.DM \\
\midrule
\texttt{products} & Home \& Kitchen, Health \& Personal Care, Beauty, Sports \& Outdoors, Books, Patio Lawn \& Garden, Toys \& Games, CDs \& Vinyl, Cell Phones \& Accessories, Grocery \& Gourmet Food, Arts Crafts \& Sewing, Clothing Shoes \& Jewelry, Electronics, Movies \& TV, Software, Video Games, Automotive, Pet Supplies, Office Products, Industrial \& Scientific, Musical Instruments, Tools \& Home Improvement, Magazine Subscriptions, Baby Products, Appliances, Kitchen \& Dining, Collectibles \& Fine Art, All Beauty, Luxury Beauty, Amazon Fashion, Computers, All Electronics, Purchase Circles, MP3 Players \& Accessories, Gift Cards, Office \& School Supplies, Home Improvement, Camera \& Photo, GPS \& Navigation, Digital Music, Car Electronics, Baby, Kindle Store, Kindle Apps, Furniture \& Decor \\
\bottomrule
\end{tabular}
\end{table}

\section{Baseline Methods}
\label{sec:baselines}

To evaluate the effectiveness of our proposed framework, we compare it against established methods from three primary categories: (1) \textit{Classical GNN Models}, (2) \textit{LLM-as-Predictors}, and (3) \textit{LLM-Graph Methods}. These baselines represent diverse approaches to handling Text-Attributed Graphs, ranging from traditional graph neural networks to modern large language model-based methods.

\subsection{Classical GNN Models}
We include established graph neural network architectures that rely primarily on graph structure and node features without leveraging the reasoning capabilities of language models:

\begin{itemize}[leftmargin=*,itemsep=0.2pt,topsep=1.pt]
    \item \textbf{GCN}~\citep{kipf2016semi}: Graph Convolutional Networks perform neighborhood aggregation through spectral convolutions, using a simple and effective message-passing scheme.
    \item \textbf{SGC}~\citep{wu2019simplifying}: Simple Graph Convolution simplifies GCNs by removing nonlinearities between layers and collapsing weight matrices, resulting in a single linear transformation followed by a softmax classifier, while maintaining competitive performance with significantly reduced computational complexity.
\end{itemize}

\subsection{LLM-as-Predictors Methods}
We include baselines that utilize general-purpose LLMs with various prompting strategies for direct downstream classification tasks. In this paradigm, a node's textual and structural information, along with task-specific instructions, are tokenized and input into an LLM for prediction:

\begin{itemize}[leftmargin=*,itemsep=0.2pt,topsep=1.pt]
    \item \textbf{Zero-shot Prompting}: Direct application of LLMs without task-specific training, relying on the model's pre-trained knowledge.
    \item \textbf{Graph Chain-of-Thought}~\citep{wei2022chain,chen2024exploring}: Enables step-by-step reasoning by breaking down complex graph-related problems into sequential reasoning steps.
    \item \textbf{Neighbor-Augmented Prompting}~\citep{chen2024exploring}: Enriches prompts with structural information from node neighborhoods, providing context about graph topology to enhance zero-shot performance.
\end{itemize}

\subsection{LLM-Graph Methods}
Aligning with our focus, we benchmark against various state-of-the-art methods that effectively integrate LLMs with graph neural networks for enhanced performance on Text-Attributed Graphs:

\begin{itemize}[leftmargin=*,itemsep=0.2pt,topsep=1.pt]
    \item \textbf{GLEM}~\citep{zhao2023learning}: Combines language models with graph neural networks through a unified framework that leverages both textual and structural information.
    \item \textbf{TAPE}~\citep{he2023harnessing}: A text-attributed graph learning framework that effectively harnesses the power of language models for graph understanding.
    \item \textbf{LLM-GNN}~\citep{chen2023label}: Integrates large language models with graph neural networks for improved node classification performance.
    \item \textbf{LLaGA}~\citep{chen2024llaga}: Employs a multi-step approach where node text is first encoded via a language model, then processed through a GNN, concatenated across layers, projected into the LLM's dimensionality, and finally combined with instructions for label prediction. Only the projection layer parameters are tuned using next-token-prediction loss.
    \item \textbf{GraphGPT}~\citep{tang2024graphgpt}: Implements a comprehensive framework with three distinct pre-training and instruction tuning stages to effectively integrate graph structure with language understanding.
\end{itemize}

\section{Implementation Details}
\label{sec:implementations}
\label{sec}
This section provides comprehensive details about the implementation, hyperparameter settings, and training procedures used in our experiments.
\subsection{Implementation Details for GNN-as-Judge}
\textbf{Data Splits.} For the \texttt{Cora}, \texttt{citeseer}, and \texttt{Pubmed} datasets, we follow standard node classification protocols using 3-shot, 5-shot, and 10-shot settings, where \textit{n}-shot refers to \textit{n} labeled nodes per class for training. We use 500 nodes for validation and randomly select 1,000 nodes for testing from the remaining nodes. For \texttt{ogbn-arxiv} and \texttt{ogbn-products} dataset, we adopt the original split for both validation and testing provided by \citep{hu2020open}. For the training split, we randomly select 3-shot, 5-shot, and 10-shot settings by sampling \textit{n} nodes from each class within the original training set to create our labeled node sets.

\textbf{GNN Component.}
For the GNN component of our \textit{GNN-as-Judge} framework and other GNN-based baselines, we implement a 2-layer GCN architecture \citep{kipf2016semi} with 64-dimensional hidden representations. We perform a limited grid search on dropout rates, exploring values in [0.3, 0.5, 0.7], and include batch normalization in our architecture. We set the learning rate to 1e-2 and train for up to 200 epochs with a patience of 100 for early stopping. For optimization, we use the Adam optimizer with a weight decay of 5e-4, following standard practices in graph neural network training.

\textbf{LLM Component.} Our \textit{GNN-as-Judge} framework utilizes Llama-3-8B-Instruct~\citep{grattafiori2024llama} as the base model. We implement parameter-efficient fine-tuning using LoRA~\citep{hu2022lora} with rank 8 and alpha value of 16, which enables efficient adaptation of the pre-trained weights while maintaining computational feasibility. We set the dropout rate to 0.1 and use a batch size of 8.

\begin{itemize}[leftmargin=*,itemsep=0.1pt,topsep=1.pt]
    \item \textbf{Instruction Tuning Configuration.} We use a learning rate of $5 \times 10^{-6}$ for the Llama-3 model and train for 10 epochs during the instruction tuning stage.
    \item \textbf{Weakly-Supervised Fine-Tuning Configuration.} We maintain a learning rate of $1 \times 10^{-5}$ with 8 epochs for all datasets. The hyperparameter $\lambda$ in Eq.~\ref{eq:overall_loss}, which balances instruction tuning and preference tuning losses, is set to 0.1 across all datasets and settings based on our parameter sensitivity analysis.
\end{itemize}

\textbf{Selection Hyperparameters.} For our proposed \textit{GNN-as-Judge} pseudo-label selection mechanism described in Section~\ref{sec:methods}, we employ several key hyperparameters. We set the top-K selection parameter to 1,500 influential nodes for all datasets to balance computational efficiency with pseudo-labeling coverage. For large-scale datasets (\texttt{ogbn-arxiv}, \texttt{ogbn-products}), we extract subgraphs around labeled nodes. The preference score threshold $\tau$ is set to 0.7 across all datasets.

\textbf{Computational Environment.} All experiments are conducted on NVIDIA A100 GPUs with 80GB memory.

\subsection{Implementation Details for Other Baselines}
\begin{itemize}[leftmargin=*,itemsep=0.1pt,topsep=1.pt]
\item \textbf{GLEM}~\citep{zhao2023learning}: Following previous work~\citep{wu2025comprehensive}, we set the number of EM iterations to 1 and the pseudo-labeling ratio to 0.5. For the GNN module within GLEM, we use 2 layers with 64-dimensional hidden representations. For the LM module, we use RoBERTa~\citep{liu2019roberta} as the base model with LoRA optimization and a batch size of 32. The LM is pre-trained first across all datasets before joint training with the GNN component.
\item \textbf{TAPE}~\citep{he2023harnessing}: We utilize the provided prompt templates from the original paper to generate explanations using Llama-3-8B-Instruct~\citep{grattafiori2024llama}. We use RoBERTa~\citep{liu2019roberta} as the language model component, which is fine-tuned using LoRA with default parameter settings, while the GNN configuration remains consistent with our method.
\item \textbf{LLM-GNN}~\citep{chen2023label}: To ensure fair comparison with this baseline, we adapt the original zero-shot approach to our few-shot setting. We use instruction-tuned Llama-3-8B-Instruct~\citep{grattafiori2024llama} on labeled data as the annotator. We employ the DA-AGE method proposed in the original paper for pseudo-label selection and train the GNN using both labeled data and pseudo-labeled data together.
\item \textbf{LLaGA}~\citep{chen2024llaga}: We use HO templates for all experiments with the number of hops set to 4. We use RoBERTa~\citep{liu2019roberta} as the text encoder. The linear projection layer $\phi_\theta(\cdot)$ consists of a 2-layer MLP with a hidden dimension of 2048. The batch size is set to 64 and learning rate to $1 \times 10^{-4}$. The number of training epochs is set to 10 for all settings.
\item \textbf{GraphGPT}~\citep{tang2024graphgpt}: Following previous work~\citep{wu2025comprehensive}, we exclude the text-graph grounding stage since the inclusion of this stage does not consistently lead to performance improvements. For the self-supervised instruction tuning stage, we construct self-supervised training data for each dataset to perform dataset-specific graph matching tasks. In the task-specific instruction tuning stage, we utilize the training data to create $\langle\text{instruction, ground-truth label}\rangle$ pairs following the original prompt design. The training parameters for self-supervised instruction tuning stage include 2 epochs with a learning rate of $1 \times 10^{-4}$ and a batch size of 16. For task-specific instruction tuning stage, we train for 10 epochs with a learning rate of $1 \times 10^{-4}$ and a batch size of 32.
\end{itemize}

\section{Theoretical Analysis}
\label{sec:proof}
\subsection{Proof of Theorem 1}
\label{sec:node_influence_decay}
Following previous study~\citep{xu2018representation, huang2020graph,wang2020unifying}, we use GCN~\citep{kipf2016semi} as as the exemplar GNN model. The propagation mechanism for the $l$-th layer in GCN is formulated as: $\mathbf{H}^{(l+1)} = \sigma(\mathbf{\hat{A}}\mathbf{H}^{(l)}\mathbf{W}^{(l)})$, where $\mathbf{H}^{(l)}$ represents the node feature matrix and $\mathbf{W}^{(l)}$ denotes the learnable parameter matrix at layer $l$. The term $\mathbf{\hat{A}} = \mathbf{D}^{-1}\mathbf{A}$ corresponds to the row-normalized adjacency matrix. To simplify our theoretical derivations, we adopt the standard assumptions from \citep{wang2020unifying}, that the activation function $\sigma$ acts as the identity operator and the weight matrix $W$ is the identity matrix.
\setcounter{theorem}{0}
\begin{theorem}[]
\label{thm:node_influence_decay_appendix}
Let $\mathcal{P}_{v_i,v_j}$ denote the set of all paths between nodes $v_i$ and $v_j$, and let $\mathcal{P}^*_{v_i,v_j} \subseteq \mathcal{P}_{v_i,v_j}$ denote the set of shortest paths. For any path $t \in \mathcal{P}_{v_i,v_j}$, let $D^t_{GM}$ be the geometric mean of node degrees occurring on path $t$ defined as $D^t_{GM} = \left(\prod_{v_k \in t} D_{v_k v_k}\right)^{1/|t|}$, where $|t|$ is the path length and $D_{v_k v_k}$ is the degree of node $v_k$. Define $D^*_{GM} = \min_{t \in \mathcal{P}^*_{v_i,v_j}}\{D^t_{GM}\}$ as the minimum geometric mean among shortest paths, $h^* = d(v_i, v_j)$ as the shortest path distance between $v_i$ and $v_j$, and $|\mathcal{P}^*_{v_i,v_j}|$ as the number of shortest paths between $v_i$ and $v_j$. Then, the node influence $I_{v_i,v_j}$ from $v_i$ to $v_j$ satisfies:
\begin{equation}
I_{v_i,v_j} = \left\|\partial \mathbf{x}_{v_j}^{(\infty)} / \partial \mathbf{x}_{v_i}^{(\infty)}\right\| \leq \frac{|\mathcal{P}^*_{v_i,v_j}|}{(D^*_{GM})^{h^*}}
\end{equation}
\end{theorem}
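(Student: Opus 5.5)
The plan is to work in the linearized GCN setting fixed at the start of this appendix: $\sigma$ is the identity, every $\mathbf{W}^{(l)}$ is the identity, and $\hat{\mathbf{A}} = \mathbf{D}^{-1}\mathbf{A}$. Unrolling the propagation gives $\mathbf{H}^{(k)} = \hat{\mathbf{A}}^k \mathbf{H}^{(0)}$. Hence the Jacobian of node $v_j$'s representation with respect to node $v_i$'s is the scalar $[\hat{\mathbf{A}}^k]_{v_j v_i}$ times an identity matrix. Since the norm is a subordinate norm, $\|\mathbf{I}\| = 1$, so the influence equals the nonnegative scalar $[\hat{\mathbf{A}}^k]_{v_j v_i}$. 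The whole proof then reduces to bounding one entry of a power of the random-walk matrix.

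The key combinatorial step is to expand the matrix power as a sum over walks:
\[
[\hat{\mathbf{A}}^k]_{v_j v_i} = \sum_{(u_0=v_j,u_1,\dots,u_k=v_i)} \prod_{m=0}^{k-1} \frac{A_{u_m u_{m+1}}}{D_{u_m u_m}} .
\]
Each walk therefore contributes the reciprocal of the product of the degrees of the $k$ nodes it departs from. I will identify $|t|$ in the definition of $D^t_{GM}$ with these $k$ degree factors, so that a path $t$ of length $h^*$ contributes exactly $(D^t_{GM})^{-h^*}$. For every shortest path $t \in \mathcal{P}^*_{v_i,v_j}$ we have $D^t_{GM} \ge D^*_{GM}$ by the definition of $D^*_{GM}$. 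Each such path thus contributes at most $(D^*_{GM})^{-h^*}$, and summing over the $|\mathcal{P}^*_{v_i,v_j}|$ shortest paths gives the claimed bound.

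The main obstacle is the phrase ``infinite layers.'' Walks of length $k > h^*$ also contribute to $[\hat{\mathbf{A}}^k]_{v_j v_i}$. Moreover, for a connected non-bipartite graph, $\hat{\mathbf{A}}^k$ converges to the degenerate stationary limit $\mathbf{1}\pi^\top$, so the literal $\infty$-layer Jacobian does not single out shortest paths. I would resolve this the way \citep{wang2020unifying} and \citep{huang2020graph} do. The influence is read at the propagation depth at which $v_i$'s signal first reaches $v_j$, namely $k = h^*$. At that depth every walk from $v_j$ to $v_i$ of length exactly $h^*$ is necessarily a shortest path, since a walk of length equal to the distance cannot repeat vertices. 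The walk sum then coincides exactly with the sum over $\mathcal{P}^*_{v_i,v_j}$, and the bound follows.

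As a fallback, if one insists on larger $k$, I would instead bound the leading-order term. Every step multiplies by an entry of the row-stochastic $\hat{\mathbf{A}}$, which is at most $1/D_{u_m u_m} \le 1$. So the longer-walk contributions are of higher order in these inverse degrees and can be treated as the decaying tail. In that case I would state explicitly that the inequality concerns this dominant shortest-path term, which is the quantity used to define $\mathcal{IS}(v_j)$ in the main text.
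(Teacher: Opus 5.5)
Your main argument is correct for the quantity you actually bound, and its engine is the same as the paper's. The paper also linearizes GCN with $\hat{\mathbf{A}}=\mathbf{D}^{-1}\mathbf{A}$ and unrolls the propagation by recursive substitution into a sum over paths of $1/(D_{v_jv_j}D_{p_1p_1}\cdots D_{p_{n_t}p_{n_t}})$. Those factors are the degrees of $v_j$ and the interior vertices, i.e.\ exactly your identification of the $|t|$ degree factors. That identification is necessary: with the literal product over all $|t|+1$ vertices, the bound already fails for $h^*=1$ whenever $D_{v_iv_i}>1$. The paper then bounds by the number of paths times the largest term, as you do.

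The difference is how depth is handled. The paper keeps infinitely many layers, sums over all of $\mathcal{P}_{v_j,v_i}$, and stops at $I_{v_i,v_j}\le|\mathcal{P}_{v_j,v_i}|\,(D^{t^*}_{GM})^{-|t^*|}$, with $t^*$ the maximizing path. Since $|\mathcal{P}_{v_j,v_i}|\ge|\mathcal{P}^*_{v_i,v_j}|$ and $t^*$ is chosen from a superset of the shortest paths, this is at least the stated right-hand side. So the paper never actually justifies the passage to $|\mathcal{P}^*_{v_i,v_j}|$, $D^*_{GM}$, $h^*$. Moreover, with unbounded depth the substitution produces walks of every length, so that count is not even finite. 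By stopping at depth $h^*$ and differentiating with respect to input features, you make every contributing walk a shortest path. The sum is then exactly over $\mathcal{P}^*_{v_i,v_j}$, and the stated bound follows rigorously, with equality when all shortest paths share the same degree product. The price is that you prove it for the depth-$h^*$ Jacobian $\partial\mathbf{h}^{(h^*)}_{v_j}/\partial\mathbf{h}^{(0)}_{v_i}$ rather than the literal $\partial\mathbf{x}^{(\infty)}_{v_j}/\partial\mathbf{x}^{(\infty)}_{v_i}$. Say so explicitly in your statement.

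Drop the fallback, though: it is not merely unproven but false. Longer walks do not form a decaying tail. Each carries more inverse-degree factors, but there are more of them, and $[\hat{\mathbf{A}}^k]_{v_jv_i}$ approaches a stationary value rather than $0$. Take the path graph with vertices $v_j,a,v_i,\ell$ in that order. The stated bound is $1/(D_{v_jv_j}D_{aa})=\tfrac12$, attained at $k=h^*=2$, yet $[\hat{\mathbf{A}}^4]_{v_jv_i}=\tfrac14+\tfrac18+\tfrac14=\tfrac58$. The entries keep increasing toward $\tfrac23$ along even $k$. So the inequality holds only at depth exactly $h^*$, and that is the version you should present.
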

\begin{proof}
According to the GCN propagation rule, the final representation of node $v_j$ is:
\begin{equation}
\mathbf{x}_{v_j}^{(\infty)} = \frac{1}{D_{v_j v_j}} \sum_{v_k \in \mathcal{N}(v_j)} a_{v_j v_k} \mathbf{x}_{v_k}^{(\infty)} \tag{1}
\end{equation}
where $\mathcal{N}(v_j)$ denotes the neighboring nodes of $v_j$, $a_{v_j v_k}$ is the edge weight (typically 1 for unweighted graphs), and $D_{v_j v_j}$ is the degree of node $v_j$.

By recursive substitution incorporating neighbors at multiple hops:
\begin{equation}
\mathbf{x}_{v_j}^{(\infty)} = \frac{1}{D_{v_j v_j}} \sum_{v_k \in \mathcal{N}(v_j)} a_{v_j v_k} \frac{1}{D_{v_k v_k}} \sum_{v_l \in \mathcal{N}(v_k)} a_{v_k v_l} \cdots \frac{1}{D_{v_m v_m}} \sum_{v_o \in \mathcal{N}(v_m)} a_{v_m v_o} \mathbf{x}_{v_o}^{(\infty)} \tag{2}
\end{equation}

The node influence $I_{v_i,v_j} = \left\|\frac{\partial \mathbf{x}_{v_j}^{(\infty)}}{\partial \mathbf{x}_{v_i}^{(\infty)}}\right\|$ can be computed by taking the derivative with respect to $\mathbf{x}_{v_i}^{(\infty)}$. Since we are calculating the gradient between two feature vectors $\mathbf{x}_{v_j}^{(\infty)}$ and $\mathbf{x}_{v_i}^{(\infty)}$, the partial derivative on nodes that are not in the paths $p_1, ..., p_m$ between node $v_i$ and $v_j$ becomes 0 and these nodes are removed:

\begin{equation}
\frac{\partial \mathbf{x}_{v_j}^{(\infty)}}{\partial \mathbf{x}_{v_i}^{(\infty)}} = \sum_{t \in \mathcal{P}_{v_j,v_i}} \left( \frac{1}{D_{v_j v_j}} a_{v_j,p_1^t} \frac{1}{D_{p_1^t,p_1^t}} a_{p_1^t,p_2^t} \cdots \frac{1}{D_{p_{n_t}^t,p_{n_t}^t}} a_{p_{n_t}^t,v_i} \right) \frac{\partial \mathbf{x}_{v_i}^{(\infty)}}{\partial \mathbf{x}_{v_i}^{(\infty)}} \tag{3}
\end{equation}

We separate the scalar terms and the derivative term within the matrix norm and use the absolute homogeneous property ($\|\alpha A\| = |\alpha|\|A\|$) of the matrix norm:

\begin{equation}
\left|\frac{\partial \mathbf{x}_{v_j}^{(\infty)}}{\partial \mathbf{x}_{v_i}^{(\infty)}}\right| = \left|\sum_{t \in \mathcal{P}_{v_j,v_i}} \left( \frac{1}{D_{v_j v_j}} \frac{1}{D_{p_1^t,p_1^t}} \cdots \frac{1}{D_{p_{n_t}^t,p_{n_t}^t}} a_{v_j,p_1^t} a_{p_1^t,p_2^t} \cdots a_{p_{n_t}^t,v_i} \right)\right| \cdot \left|\frac{\partial \mathbf{x}_{v_i}^{(\infty)}}{\partial \mathbf{x}_{v_i}^{(\infty)}}\right| \tag{4}
\end{equation}

Since the Jacobian of the same vectors $\frac{\partial \mathbf{x}_{v_i}^{(\infty)}}{\partial \mathbf{x}_{v_i}^{(\infty)}}$ is the identity matrix $I$ and for any subordinate norm ($\|A\| = \sup_{\|x\|=1}\{\|Ax\|\}$), we have $\|I\| = 1$:

\begin{equation}
I_{v_i,v_j} = \left|\sum_{t \in \mathcal{P}_{v_j,v_i}} \frac{1}{D_{v_j v_j} D_{p_1^t,p_1^t} \cdots D_{p_{n_t}^t,p_{n_t}^t}} a_{v_j,p_1^t} a_{p_1^t,p_2^t} \cdots a_{p_{n_t}^t,v_i}\right| \tag{5}
\end{equation}

Using the triangle inequality and identifying the maximum term among all paths:
\begin{equation}
I_{v_i,v_j} \leq \sum_{t \in \mathcal{P}_{v_j,v_i}} \left|\frac{1}{D_{v_j v_j} D_{p_1^t,p_1^t} \cdots D_{p_{n_t}^t,p_{n_t}^t}} a_{v_j,p_1^t} a_{p_1^t,p_2^t} \cdots a_{p_{n_t}^t,v_i}\right| \tag{6}
\end{equation}

\begin{equation}
\leq |\mathcal{P}_{v_j,v_i}| \cdot \max_{t \in \mathcal{P}_{v_j,v_i}} \left|\frac{1}{D_{v_j v_j} D_{p_1^t,p_1^t} \cdots D_{p_{n_t}^t,p_{n_t}^t}} a_{v_j,p_1^t} a_{p_1^t,p_2^t} \cdots a_{p_{n_t}^t,v_i}\right| \tag{7}
\end{equation}

For binary networks where edge weights $a = 1$ (non-negative), and focusing on the path $p^{t^*}$ that maximizes the influence contribution:

\begin{equation}
I_{v_i,v_j} \leq |\mathcal{P}_{v_j,v_i}| \cdot \frac{1}{D_{v_j v_j} D_{p_1^{t^*},p_1^{t^*}} \cdots D_{p_{n^*}^{t^*},p_{n^*}^{t^*}}} \tag{8}
\end{equation}

Expressing the degree terms in geometric mean format where $|t^*|$ is the length of path $t^*$:

\begin{equation}
I_{v_i,v_j} \leq |\mathcal{P}_{v_j,v_i}| \cdot \frac{1}{(D^{t^*}_{GM})^{|t^*|}} \tag{9}
\end{equation}

\end{proof}

\subsection{Proof of Theorem 2}
\label{sec:agreement}
\setcounter{theorem}{1}
\begin{theorem}[]
\label{thm:agreement_superiority_appendix}
Consider $\mathcal{V}_{\text{selected}}$ with ground truth labels $\mathbf{y}^*$. Let $p_{\text{LLM}}$ and $p_{\text{GNN}}$ denote the individual accuracies of the LLM and GNN respectively. Under conditional independence of model errors due to different inductive biases and uniform error distribution across incorrect classes, the accuracy of the agreement set satisfies:
\begin{equation}
P(y_i^* | v_i \in \mathcal{V}_{\text{agreed}}) \geq \frac{p_{\text{LLM}} \cdot p_{\text{GNN}}}{p_{\text{LLM}} \cdot p_{\text{GNN}} + \frac{(1-p_{\text{LLM}})(1-p_{\text{GNN}})}{C-1}}
\end{equation}
where $C$ is the number of classes. Furthermore, this lower bound exceeds $\max(p_{\text{LLM}}, p_{\text{GNN}})$ when both models perform better than random guessing.
\end{theorem}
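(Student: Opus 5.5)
The bound follows from a direct Bayes-rule computation on a single node of $\mathcal{V}_{\text{selected}}$. Fix $v_i$ with true label $y_i^*$. The conditional independence assumption says that, given $y_i^*$, the events ``LLM predicts class $a$'' and ``GNN predicts class $b$'' are independent. The uniform-error assumption says that when a model is wrong, its prediction is uniform over the $C-1$ incorrect classes. I would then split the agreement event $\{\hat y_i^{\text{LLM}} = \hat y_i^{\text{GNN}}\}$ into two disjoint pieces: (a) both models are correct, and (b) both are wrong but output the same incorrect class.

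Computing each piece under the assumptions:
\begin{itemize}
\item Piece (a) has probability $p_{\text{LLM}}p_{\text{GNN}}$.
\item For piece (b), both are wrong with probability $(1-p_{\text{LLM}})(1-p_{\text{GNN}})$. Given both are wrong, they coincide with probability $\sum_{c\neq y^*}\frac{1}{(C-1)^2}=\frac{1}{C-1}$.
\end{itemize}
By Bayes' rule, $P(\hat y_i = y_i^*\mid v_i\in\mathcal{V}_{\text{agreed}})$ is the ratio of (a) to (a)+(b), which is exactly the stated expression. So under the idealized assumptions the inequality holds with equality. To justify the ``$\geq$'' more robustly, I would add a remark: if errors are not uniform but the models' error distributions are independent, the collision probability $\sum_c q^{\text{LLM}}_c q^{\text{GNN}}_c$ can exceed $1/(C-1)$. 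So the bound is really tied to the uniform-error hypothesis, and I would state it as an equality under the hypotheses. Averaging over nodes (or assuming homogeneous accuracies across $\mathcal{V}_{\text{selected}}$) turns the per-node statement into the set-level one.

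For the ``furthermore'' part, write $B=\frac{ab}{ab+\frac{(1-a)(1-b)}{C-1}}$ with $a=p_{\text{LLM}}$, $b=p_{\text{GNN}}$. By symmetry it suffices to show $B>a$. Cross-multiplying, this is equivalent to
\[
b(1-a) > \frac{(1-a)(1-b)}{C-1},
\]
that is, for $a<1$, $b(C-1)>1-b$, or $b>1/C$. This is exactly the condition that the GNN beats random guessing. Symmetrically, $B>b$ iff $a>1/C$. So when both exceed $1/C$, $B>\max(a,b)$. The edge case $a=1$ or $b=1$ gives $B=1$, which is consistent with $B\geq\max(a,b)$ (with equality).

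The main obstacle is the modeling step, not the algebra. I need to state precisely which probability space is meant (a random node from $\mathcal{V}_{\text{selected}}$ with node-independent accuracies) and to make the independence and uniform-error assumptions exact enough that the collision probability equals $1/(C-1)$. I would do this first, then perform the Bayes computation, and finish with the monotonicity comparison above.
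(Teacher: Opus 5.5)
Your proposal is correct and follows the paper's proof essentially step for step. It uses the same split of the agreement event into both-correct and same-wrong-class pieces, the same collision probability $\frac{1}{C-1}$, the same Bayes ratio (which, as you note, the hypotheses make an equality rather than a genuine lower bound), and the same cross-multiplication reducing $B>p_{\text{LLM}}$ to $p_{\text{GNN}}>1/C$; you also correctly flag that strictness fails when $p_{\text{LLM}}=1$ or $p_{\text{GNN}}=1$, a case the paper's division by $p_{\text{LLM}}(1-p_{\text{LLM}})$ silently excludes.
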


\begin{proof}
We establish a lower bound on the accuracy of the agreement set by analyzing the probability of correct predictions under model agreement.

The probability that both models agree on their predictions decomposes as:
\begin{equation}
P(\text{agreement}) = P(y_i^{\text{LLM}} = y_i^{\text{GNN}} = y_i^*) + P(y_i^{\text{LLM}} = y_i^{\text{GNN}} \neq y_i^*) = p_{\text{LLM}} \cdot p_{\text{GNN}} + \epsilon
\end{equation}
where $\epsilon$ represents the probability of agreeing on an incorrect prediction.

Under conditional independence and uniform error distribution across $C-1$ incorrect classes:
\begin{equation}
\epsilon = \sum_{j \neq y_i^*} P(y_i^{\text{LLM}} = j | y_i^{\text{LLM}} \neq y_i^*) \cdot P(y_i^{\text{GNN}} = j | y_i^{\text{GNN}} \neq y_i^*) \times P(y_i^{\text{LLM}} \neq y_i^*) \cdot P(y_i^{\text{GNN}} \neq y_i^*)
\end{equation}
\begin{equation}
= \sum_{j \neq y_i^*} \frac{1}{C-1} \cdot \frac{1}{C-1} \cdot (1-p_{\text{LLM}}) \cdot (1-p_{\text{GNN}}) = \frac{(1-p_{\text{LLM}})(1-p_{\text{GNN}})}{C-1}
\end{equation}

By Bayes' theorem, the accuracy of the agreement set is:
\begin{equation}
P(y_i^* | \text{agreement}) = \frac{P(\text{agreement} | y_i^*) \cdot P(y_i^*)}{\P(\text{agreement})} = \frac{p_{\text{LLM}} \cdot p_{\text{GNN}}}{p_{\text{LLM}} \cdot p_{\text{GNN}} + \epsilon}
\end{equation}
\begin{equation}
= \frac{p_{\text{LLM}} \cdot p_{\text{GNN}}}{p_{\text{LLM}} \cdot p_{\text{GNN}} + \frac{(1-p_{\text{LLM}})(1-p_{\text{GNN}})}{C-1}}
\end{equation}

This establishes the lower bound stated in the theorem.

To show this lower bound exceeds individual model performance, we prove it for $p_{\text{LLM}}$:
\begin{equation}
\frac{p_{\text{LLM}} \cdot p_{\text{GNN}}}{p_{\text{LLM}} \cdot p_{\text{GNN}} + \frac{(1-p_{\text{LLM}})(1-p_{\text{GNN}})}{C-1}} > p_{\text{LLM}}
\end{equation}

Cross-multiplying and simplifying:
\begin{equation}
p_{\text{LLM}} \cdot p_{\text{GNN}} > p_{\text{LLM}} \left( p_{\text{LLM}} \cdot p_{\text{GNN}} + \frac{(1-p_{\text{LLM}})(1-p_{\text{GNN}})}{C-1} \right)
\end{equation}
\begin{equation}
0 > p_{\text{LLM}}^2 \cdot p_{\text{GNN}} - p_{\text{LLM}} \cdot p_{\text{GNN}} + \frac{p_{\text{LLM}}(1-p_{\text{LLM}})(1-p_{\text{GNN}})}{C-1}
\end{equation}
\begin{equation}
p_{\text{LLM}} \cdot p_{\text{GNN}}(1-p_{\text{LLM}}) > \frac{p_{\text{LLM}}(1-p_{\text{LLM}})(1-p_{\text{GNN}})}{C-1}
\end{equation}

Dividing by $p_{\text{LLM}}(1-p_{\text{LLM}}) > 0$:
\begin{equation}
p_{\text{GNN}} > \frac{1-p_{\text{GNN}}}{C-1} \Rightarrow p_{\text{GNN}}(C-1) > 1-p_{\text{GNN}} \Rightarrow p_{\text{GNN}} \cdot C > 1 \Rightarrow p_{\text{GNN}} > \frac{1}{C}
\end{equation}

By symmetry, the same holds for $p_{\text{LLM}} > \frac{1}{C}$. Therefore, when both models perform better than random guessing, the lower bound exceeds both individual accuracies:
\begin{equation}
P(y_i^* | v_i \in \mathcal{V}_{\text{agreed}}) \geq \frac{p_{\text{LLM}} \cdot p_{\text{GNN}}}{p_{\text{LLM}} \cdot p_{\text{GNN}} + \frac{(1-p_{\text{LLM}})(1-p_{\text{GNN}})}{C-1}} > \max(p_{\text{LLM}}, p_{\text{GNN}})
\end{equation}
\end{proof}

\section{Prompts}
\label{sec:prompts}

This section provides detailed prompt templates used for all LLM-based methods in our experiments. We use $\langle$raw\_text$\rangle$ to denote the node's original textual content, $\langle$labels$\rangle$ to represent the dataset-specific label space, and $\langle$graph$\rangle$ for tokenized graph context when applicable.

\subsection{Prompts for GNN-as-Judge}

For our proposed method, we use straightforward instruction-following prompts tailored to each dataset:

\begin{tcolorbox}[colback=gray!5,colframe=black!75!black,fontupper=\small,title=\textbf{GNN-as-Judge Prompt Templates}]
\small
\textbf{Cora:} Given a node-centered graph with centric node description: \textcolor{blue}{$\langle$raw\_text$\rangle$}, each node represents a paper, we need to classify the center node into 7 classes: Case\_Based, Genetic\_Algorithms, Neural\_Networks, Probabilistic\_Methods, Reinforcement\_Learning, Rule\_Learning, Theory, please tell me which class the center node belongs to?

\vspace{4pt}
\textbf{citeseer:} Given a node-centered graph with centric node description: \textcolor{blue}{$\langle$raw\_text$\rangle$}, each node represents a paper, we need to classify the center node into 6 classes: Agents, ML (Machine Learning), IR (Information Retrieval), DB (Databases), HCI (Human-Computer Interaction), AI (Artificial Intelligence), please tell me which class the center node belongs to?

\vspace{4pt}
\textbf{Pubmed:} Given a node-centered graph with centric node description: \textcolor{blue}{$\langle$raw\_text$\rangle$}, each node represents a paper about Diabetes, we need to classify the center node into 3 classes: Experimentally induced diabetes, Type 1 diabetes, Type 2 diabetes, please tell me which class the center node belongs to?

\vspace{4pt}
\textbf{ogbn-arxiv:} Given a node-centered graph with centric node description: \textcolor{blue}{$\langle$raw\_text$\rangle$}, we need to classify the center node into 40 arXiv CS sub-categories: cs.AI(Artificial Intelligence), cs.CV(Computer Vision and Pattern Recognition), cs.LG(Machine Learning), cs.CL(Computation and Language), cs.NE(Neural and Evolutionary Computing), ..., please tell me which class the center node belongs to?

\vspace{4pt}
\textbf{ogbn-products:} Given a node-centered graph with centric node description: \textcolor{blue}{$\langle$raw\_text$\rangle$}, each node represents a product, we need to classify the center node into 47 classes: Home \& Kitchen, Health \& Personal Care, Beauty, Sports \& Outdoors, Books, Electronics, ..., please tell me which class the center node belongs to?
\end{tcolorbox}

\subsection{Prompts for GraphGPT}

GraphGPT uses graph-aware templates with structured graph tokens:

\begin{tcolorbox}[colback=gray!5,colframe=black!75!black,fontupper=\small,title=\textbf{GraphGPT Prompt Templates}]
\small
\textbf{Citation Networks (Cora, citeseer, Pubmed, ogbn-arxiv):} \\
Given a citation graph: \textcolor{blue}{$\langle$graph$\rangle$} where the 0-th node is the target paper, with the following information: \textcolor{blue}{$\langle$raw\_text$\rangle$}. Question: Which of the following specific research does this paper belong to: \textcolor{purple}{$\langle$labels$\rangle$}. Directly give the full name of the most likely category of this paper.

\vspace{4pt}
\textbf{E-commerce Network (ogbn-products):} \\
Given an e-commerce network: \textcolor{blue}{$\langle$graph$\rangle$} where the 0-th node is the target product, with the following information: \textcolor{blue}{$\langle$raw\_text$\rangle$}. Question: We need to classify the center product into 47 classes: \textcolor{purple}{$\langle$labels$\rangle$}. Directly tell me which product category the center product belongs to.
\end{tcolorbox}

\subsection{Prompts for LLaGA}

LLaGA employs HO templates with multi-hop structural information:

\begin{tcolorbox}[colback=gray!5,colframe=black!75!black,fontupper=\small,title=\textbf{LLaGA Prompt Templates}]
\small
\textbf{Citation Networks (Cora, citeseer, Pubmed, ogbn-arxiv):} \\ Given a node-centered graph: \textcolor{blue}{$\langle$graph$\rangle$}, each node represents a paper, we need to classify the center node into [N] classes: \textcolor{purple}{$\langle$labels$\rangle$}, please tell me which class the center node belongs to?

\vspace{4pt}
\textbf{E-commerce Network (ogbn-products):} \\ Given a node-centered graph: \textcolor{blue}{$\langle$graph$\rangle$}, each node represents a product, we need to classify the center node into 47 classes: Home \& Kitchen, Health \& Personal Care, Beauty, Sports \& Outdoors, Books, Patio Lawn \& Garden, Electronics, ..., please tell me which class the center node belongs to?
\end{tcolorbox}

\subsection{Prompts for Zero-Shot LLM Methods}

For direct LLM inference without fine-tuning, we employ several prompting strategies:

\begin{tcolorbox}[colback=gray!5,colframe=black!75!black,fontupper=\small,title=\textbf{Zero-Shot Prompting Templates}]
\small

\vspace{4pt}
\textbf{Chain-of-Thought:} \\
Question: Which of the following types does this [paper/product] belong to? Here are the [N] categories: \textcolor{purple}{$\langle$labels$\rangle$}. Let's think about it step by step. Analyze the content of the node and choose one appropriate category. Output format: $\langle$reason:$\rangle$, $\langle$classification:$\rangle$

\vspace{4pt}
\textbf{Neighbor-Augmented:} \\
Given the information of the node: \textcolor{blue}{$\langle$raw\_text$\rangle$}. Given the information of its neighbors \textcolor{blue}{$\langle$raw\_text$\rangle$}. Here I give you the content of the node itself and the information of its 2-hop neighbors. The relation between the node and its neighbors is [``citation''/``co-purchase'']. Question: Based on this information, which of the following categories does this [item] belong to? Here are the [N] categories: \textcolor{purple}{$\langle$labels$\rangle$}. Reply only one category that you think this [item] might belong to.
\end{tcolorbox}

\section{Quantitative Analysis of Selected Pseudo Labels}
\label{sec:pseudo_label_analysis}
To provide deeper insights into our \textit{GNN-as-Judge} framework, we conduct a comprehensive analysis of the pseudo-label selection process across all datasets. This analysis examines the agreement patterns between LLM and GNN predictions, the accuracy distributions of selected labels, and the composition of the final augmented training sets. Table~\ref{tab:pseudo_label_analysis} presents the quantitative results of our pseudo-label selection strategy in the 3-shot setting. 
\subsection{Agreement and Disagreement Patterns}
\begin{table}[htbp]
\centering
\caption{
    Quantitative analysis of pseudo label selection for the 3-shot setting.
}
\label{tab:pseudo_label_analysis}
\small 
\setlength{\tabcolsep}{4pt} 
\begin{tabular}{lcccccc}
\toprule
\multirow{2}{*}{\textbf{Dataset}} & \multicolumn{3}{c}{\textbf{Initial Pool}} & \multicolumn{3}{c}{\textbf{Final Set}} \\
\cmidrule(lr){2-4} \cmidrule(lr){5-7}
& \textbf{A/D Ratio} & \textbf{Agree Acc.} & \textbf{Disagree Acc.} & \textbf{Size} & \textbf{Final Acc.} & \textbf{Sel. Disagree Acc.} \\
\midrule
\texttt{Cora} & 978/520 & 93.35 & 61.68 & 1,103 & 92.11 & 82.41 \\
\texttt{citeseer} & 881/614 & 91.59 & 66.56 & 1,120 & 88.83 & 72.17 \\
\texttt{Pubmed} & 991/504 & 92.02 & 31.55 & 1,191 & 83.29 & 40.03 \\
\texttt{ogbn-arxiv} & 611/888 & 84.40 & 37.38 & 817 & 78.09 & 59.37 \\
\texttt{ogbn-products} & 753/733 & 87.12 & 37.24 & 883 & 81.99 & 52.29 \\
\bottomrule
\end{tabular}
\end{table}

Table~\ref{tab:pseudo_label_analysis} presents the quantitative results of our pseudo-label selection strategy in the 3-shot setting (seed=42). The table reveals several key patterns in how LLMs and GNNs agree or disagree on different datasets. The agreement/disagreement (A/D) ratios vary across datasets. Agreement nodes consistently achieve high accuracy, validating our theoretical analysis in Theorem~\ref{thm:agreement_superiority}. Notably, the disagreement accuracy is much lower and varies substantially across datasets, highlighting the importance of careful selection within the disagreement set.

\begin{table}[htbp]
\centering
\caption{
    Accuracy (\%) of our final selected pseudo-label set compared to baseline selection methods of the same size.
}
\label{tab:pseudo_label_comparison}
\small
\setlength{\tabcolsep}{6pt}
\begin{tabular}{lcccc}
\toprule
\textbf{Dataset} & \textbf{GNN Pred. Acc.} & \textbf{LLM Pred. Acc.} & \textbf{Sel. Acc.} \\
\midrule
\texttt{Cora} & 82.32 &  68.44 & \textbf{92.11} \\
\texttt{citeseer} & 81.07  & 60.80 & \textbf{88.83} \\
\texttt{Pubmed} & 71.20  & 82.61 & \textbf{83.29} \\
\texttt{ogbn-arxiv} & 55.93  & 50.55 & \textbf{78.09} \\
\texttt{ogbn-products} & 61.38  & 62.85 & \textbf{81.99} \\
\bottomrule
\end{tabular}
\end{table}

To further contextualize our framework's performance, Table~\ref{tab:pseudo_label_comparison} compares the accuracy of our final selected pseudo-label set ``Sel. Acc'') against baselines that randomly select a set of the same size using only GNN or only LLM predictions. The results clearly indicate that our method consistently outperforms both individual models across nearly all datasets. 

\subsection{Error Correlation Analysis}
We quantify error dependence using three complementary metrics:
\begin{itemize}
[leftmargin=*,itemsep=0.1pt,topsep=1.pt]
    \item \textbf{Pearson correlation} between binary error indicators ($1$ = error), capturing linear dependence.
    \item $\boldsymbol{\Delta_{\text{L}|\text{G}} = P(L{=}1 \mid G{=}1) - P(L{=}1)}$:  
    the change in LLM error probability when the GNN errs.  
    Positive values indicate the LLM is more likely to err when the GNN errs.
    \item $\boldsymbol{\Delta_{\text{G}|\text{L}} = P(G{=}1 \mid L{=}1) - P(G{=}1)}$:  
    the change in GNN error probability when the LLM errs.
\end{itemize}
Across all datasets, the dependence metrics remain small, indicating that GNN and LLM errors exhibit only weak correlation.
\begin{table}[t]
\centering
\small
\resizebox{0.45\linewidth}{!}{
\begin{tabular}{lrrr}
\toprule
\textbf{Dataset} &
\textbf{Pearson} &
$\boldsymbol{\Delta_{\text{L}|\text{G}}}$ &
$\boldsymbol{\Delta_{\text{G}|\text{L}}}$ \\
\midrule
\texttt{Cora}            & $0.258$  & $0.216$ & $0.162$ \\
\texttt{Citeseer}        & $0.202$  & $0.159$ & $0.115$ \\
\texttt{Pubmed}          & $-0.048$ & $0.020$ & $0.062$ \\
\texttt{ogbn-arxiv}      & $0.117$  & $0.047$ & $0.058$ \\
\texttt{ogbn-products}   & $0.140$  & $0.061$ & $0.094$ \\
\bottomrule
\end{tabular}
}
\vspace{2pt}
\caption{Error independence diagnostics.  
$\Delta_{\text{L}|\text{G}} = P(\text{LLM err} \mid \text{GNN err}) - P(\text{LLM err})$ and  
$\Delta_{\text{G}|\text{L}} = P(\text{GNN err} \mid \text{LLM err}) - P(\text{GNN err})$.  
Smaller values indicate weaker dependence.}
\label{tab:error_independence}
\end{table}

\begin{table}[t]
\centering
\scriptsize
\setlength{\tabcolsep}{6.5pt}
\caption{Effect of preference score threshold $\tau$ on disagreement set size and GNN accuracy. Percentages denote proportion of the full disagreement set.}
\label{tab:tau_tradeoff}
\begin{tabular}{lcccc}
\toprule
\textbf{Dataset} & $\boldsymbol{\tau}$ & \textbf{Set Size} & \textbf{\% of Full} & \textbf{Accuracy} \\
\midrule
\multirow{4}{*}{\texttt{Cora}}
& 0.10 & 471 & 90.6\% & 63.27\% \\
& 0.40 & 262 & 50.4\% & 75.95\% \\
& 0.70 & 123 & 23.7\% & 82.11\% \\
& 0.90 & 35 & 6.7\% & 82.86\% \\
\midrule
\multirow{4}{*}{\texttt{Citeseer}}
& 0.10 & 586 & 95.4\% & 67.64\% \\
& 0.40 & 427 & 69.5\% & 71.77\% \\
& 0.70 & 238 & 38.8\% & 72.27\% \\
& 0.90 & 89 & 14.5\% & 75.28\% \\
\midrule
\multirow{4}{*}{\texttt{Pubmed}}
& 0.10 & 463 & 91.9\% & 31.17\% \\
& 0.40 & 325 & 64.5\% & 36.63\% \\
& 0.70 & 200 & 39.7\% & 40.03\% \\
& 0.90 & 51 & 10.1\% & 52.94\% \\
\midrule
\multirow{4}{*}{\texttt{ogbn-arxiv}}
& 0.10 & 815 & 91.8\% & 39.37\% \\
& 0.40 & 458 & 51.6\% & 50.76\% \\
& 0.70 & 205 & 23.1\% & 59.51\% \\
& 0.90 & 76 & 8.6\% & 63.16\% \\
\midrule
\multirow{4}{*}{\texttt{ogbn-products}}
& 0.10 & 625 & 85.3\% & 40.22\% \\
& 0.40 & 298 & 40.7\% & 47.67\% \\
& 0.70 & 130 & 17.7\% & 52.31\% \\
& 0.90 & 58 & 7.9\% & 63.79\% \\
\bottomrule
\label{tab:disagreement_tradeoff}
\end{tabular}
\end{table}
\subsection{Disagreement Set Size vs. Quality}
To analyze the trade-off between disagreement set size and quality, we examine how different preference score thresholds ($\tau$) affect the filtered disagreement set. Table~\ref{tab:disagreement_tradeoff} shows the relationship between set size and GNN accuracy across datasets. Across all datasets, we observe a consistent pattern: higher filtering (smaller sets) yields substantially higher accuracy.

\subsection{Selection Strategy Comparison}
\begin{table}[htbp]
\centering 
\vspace{-10pt}
\caption{Comparison of different pseudo label selection strategies on selected pseudo labels accuracy (\%). \textbf{Best} results are bolded.}
\small
\label{tab:pseudo_label_selection}
\begin{tabular}{@{}lccc@{}}
\toprule
\scriptsize
\textbf{Method} & \textbf{Cora} & \textbf{citeseer} & \textbf{ogbn-arxiv} \\
\midrule
Random & 64.44{\scriptsize$\pm$3.23} & 60.80{\scriptsize$\pm$2.15} & 53.93{\scriptsize$\pm$1.87} \\
Degree & 65.88{\scriptsize$\pm$1.09} & 64.26{\scriptsize$\pm$1.89} & 50.24{\scriptsize$\pm$2.86} \\
AGE & 67.36{\scriptsize$\pm$1.74} & 65.78{\scriptsize$\pm$0.94} & 52.31{\scriptsize$\pm$1.45} \\
Confidence & 75.71{\scriptsize$\pm$0.84} & \textbf{73.89{\scriptsize$\pm$1.77}} & \textbf{66.14{\scriptsize$\pm$1.48}} \\
LLM-GNN & \textbf{76.49{\scriptsize$\pm$0.74}} & 72.34{\scriptsize$\pm$1.28} & 54.67{\scriptsize$\pm$1.53} \\
\bottomrule
\end{tabular}
\end{table}

The results in Table~\ref{tab:pseudo_label_selection} reveal several important insights about pseudo label selection strategies. To ensure fair comparison, all methods follow the same experimental protocol: first selecting 1,500 candidate nodes using the respective selection strategy, then annotating them using Llama3-8B-Instruct trained under 3-shot setting. The results demonstrate that traditional selection methods face significant limitations when LLMs are unreliable. Graph topology-based approaches achieve relatively low accuracy, indicating that structural information alone is insufficient for identifying nodes suitable for high-quality pseudo-labeling using LLMs. Interestingly, even LLM confidence-based selection cannot fully ensure high-quality pseudo labels. This highlights a fundamental challenge: when the underlying LLM is unreliable in low-resource settings, neither graph structure nor model confidence provides adequate signals for pseudo label selection. These findings motivate our \textit{GNN-as-Judge} approach, which addresses these limitations through collaborative filtering that leverages both structural and semantic information.

\subsection{Diversity Analysis of Selected Nodes}
To understand the diversity of nodes selected by our approach, we conduct an analysis to demonstrate the diversity and representativeness of nodes selected by our method. From Table~\ref{tab:pseudo_label_selection}, we observe that LLM confidence is a strong baseline for selecting high-quality pseudo labels. While pseudo label accuracy is crucial, selecting diverse nodes is equally important for effective LLM training, as it prevents overfitting to specific node types and improves generalization. We compare our approach with using LLM confidence to select the top 1,500 unlabeled nodes for the \texttt{ogbn-arxiv} dataset, extracting embeddings using Llama3-8B-Instruct.
\begin{figure}[htbp]
\centering
\begin{subfigure}[t]{0.25\textwidth}
    \centering
    \includegraphics[width=\textwidth]{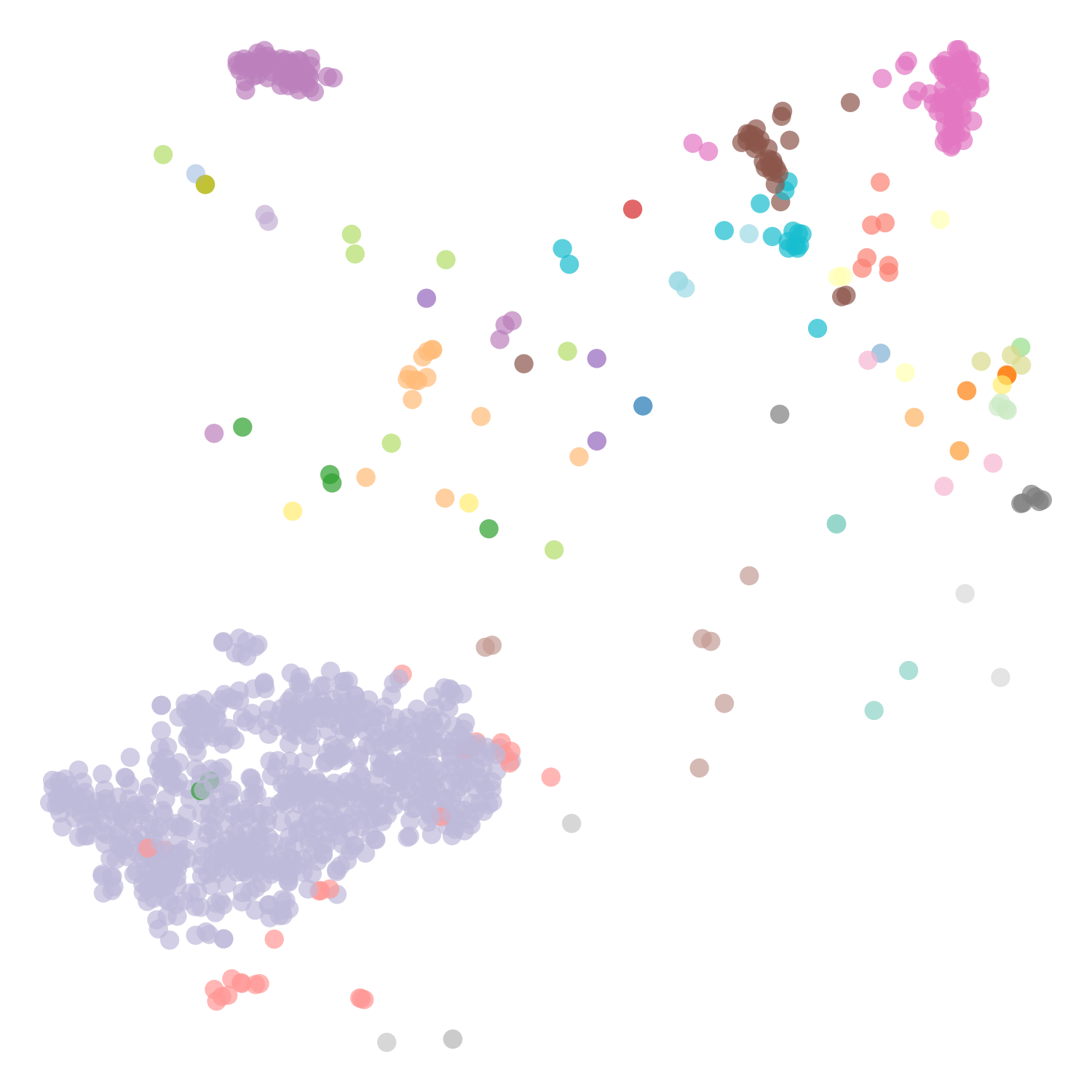}
    \subcaption{LLM Confidence}
    \label{fig:tsne_confidence}
\end{subfigure}%
\hspace{0.05\textwidth} 
\begin{subfigure}[t]{0.25\textwidth}
    \centering
    \includegraphics[width=\textwidth]{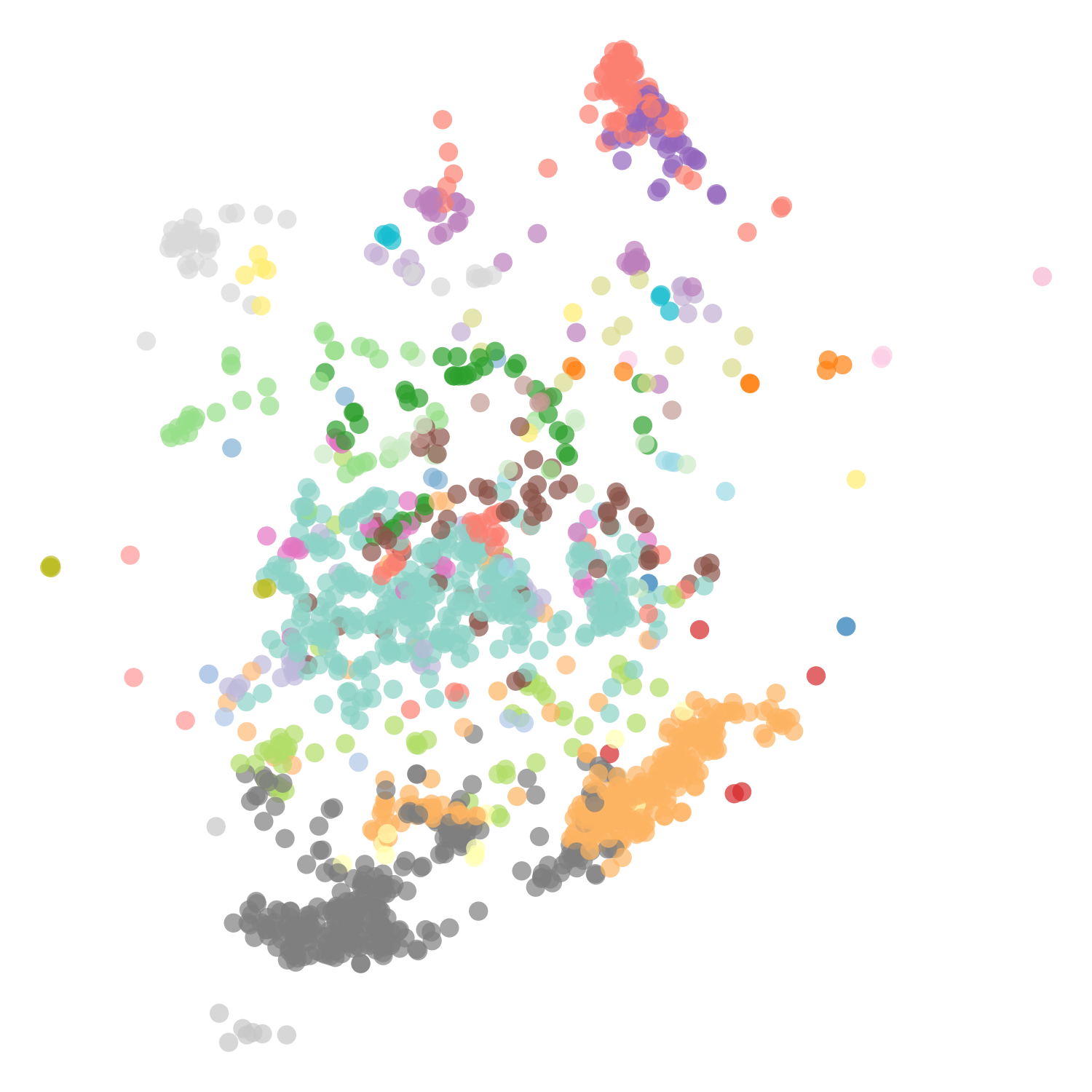}
    \subcaption{GNN-as-Judge}
    \label{fig:tsne_our_method}
\end{subfigure}
\caption{t-SNE visualization of selected pseudo-labeled nodes on the \texttt{ogbn-arxiv} dataset. 
Colors represent different classes.}
\label{fig:tsne_diversity}
\end{figure}

Figure~\ref{fig:tsne_diversity} provides a compelling visual comparison of the selection strategies. The LLM Confidence approach shows distinct, tight clusters with strong class separation, indicating it heavily favors nodes from a limited subset of the feature space. In contrast, our \textit{GNN-as-Judge} approach selects nodes that are more broadly distributed across the entire feature space, covering a wider variety of node types and characteristics. This broader selection ensures that the LLM receives training examples from diverse areas of the data distribution, leading to better generalization across different node patterns.

Table~\ref{tab:diversity_metrics} quantifies the diversity advantages of our approach. Coverage measures the average distance from each class centroid to the nearest selected node. Representativeness quantifies how well the selected nodes represent the overall data distribution using the Wasserstein distance between selected and full node distributions. Intra-class Variance measures the diversity within each class among selected nodes. This demonstrates that our approach selects nodes that are more representative of the overall data distribution.

\begin{table}[htbp]
\centering
\caption{Quantitative diversity comparison between selection methods on \texttt{ogbn-arxiv} dataset.}
\label{tab:diversity_metrics}
\small
\setlength{\tabcolsep}{8pt}
\begin{tabular}{lcccc}
\toprule
\textbf{Method} & \textbf{Coverage} $\downarrow$ & \textbf{Representativeness} $\downarrow$ & \textbf{Intra-class} $\uparrow$ \\ 
\midrule
Confidence & 11.62 & 0.34 & 12.52 \\
\textbf{GNN-as-Judge} & \textbf{8.97} & \textbf{0.10} & \textbf{15.29} \\
\bottomrule
\end{tabular}
\end{table}

\section{Additional Experiments}
\label{sec:additional}
\subsection{Sensitivity to \texorpdfstring{$\lambda$}{lambda}}
In our approach, the hyperparameter $\lambda$ controls the balance between standard instruction tuning and preference tuning as shown in Eq.~\ref{eq:overall_loss}. Figure~\ref{fig:combined_analysis} (left) illustrates how different $\lambda$ values affect performance on \texttt{Cora}, \texttt{citeseer} and \texttt{Pubmed} datasets in 3-shot settings. We observe a clear trend where lower $\lambda$ values yield significantly better results, with performance declining as $\lambda$ increases. This suggests that using a moderate weight for preference tuning loss  will maintain better model performance.
\begin{figure}[htbp]
\centering
\vspace{-10pt}
\includegraphics[width=0.7\textwidth]{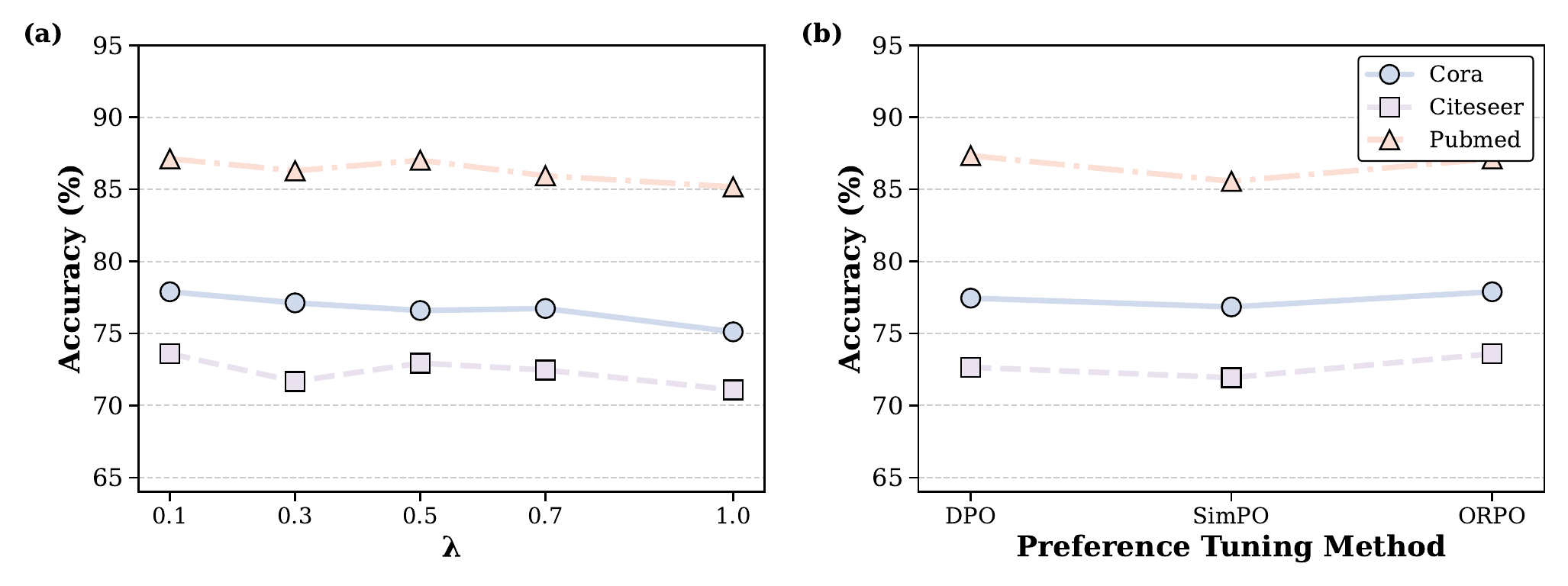}
\vspace{-10pt}
\caption{(a) Sensitivity analysis of the hyperparameter $\lambda$ showing performance trends across different values. Lower $\lambda$ values generally yield better performance, particularly for Cora and citeseer. (b) Performance comparison across different preference tuning losses (DPO, SimPO, ORPO), demonstrating the framework's compatibility with various optimization methods.}
\label{fig:combined_analysis}
\end{figure}

\subsection{Adaptation to Different Preference Tuning Losses}
We evaluate the performance of our \textit{GNN-as-Judge} framework across different preference tuning losses, comparing DPO \citep{rafailov2023direct}, SimPO \citep{meng2024simpo}, and ORPO \citep{hong2024orpo} on \texttt{Cora}, \texttt{citeseer} and \texttt{Pubmed} in 3-shot setting. As shown in Figure~\ref{fig:combined_analysis} (right), these preference tuning methods demonstrate relatively comparable performance. This observation highlights that our \textit{GNN-as-Judge} is a plug-and-play framework where different preference tuning losses can be seamlessly integrated without compromising overall performance. 

\subsection{Effectiveness of influential node selection}

To validate the importance of our influential node selection strategy, we compare the quality of pseudo labels generated for different node selection approaches. Specifically, we examine: (1) random selection of 1,000 unlabeled nodes, and (2) our influential node selection according to node incluence score. Table~\ref{tab:node_selection} presents the pseudo-label accuracy when using either GNN or LLM predictions as pseudo labels.

\begin{table}[htbp]
\centering
\caption{Comparison of pseudo-label accuracy (\%) for different node selection strategies on 1,000 unlabeled nodes under 3-shot setting. Higher accuracy indicates better quality of selected nodes for pseudo-labeling.}
\label{tab:node_selection}
\setlength{\tabcolsep}{6pt}
\scriptsize
\begin{tabular}{l|cc|cc}
\toprule
\multirow{2}{*}{\textbf{Dataset}} & \multicolumn{2}{c|}{\textbf{Random Selection}} & \multicolumn{2}{c}{\textbf{Influential Selection (Ours)}} \\
& \textbf{GNN Acc.} & \textbf{LLM Acc.} & \textbf{GNN Acc.} & \textbf{LLM Acc.} \\
\midrule
\texttt{Cora} & 68.24{\scriptsize$\pm$1.83} & 66.92{\scriptsize$\pm$2.14} & 82.39{\scriptsize$\pm$1.26} & 68.84{\scriptsize$\pm$1.52} \\
\texttt{citeseer} & 63.18{\scriptsize$\pm$2.06} & 60.73{\scriptsize$\pm$1.97} & 81.92{\scriptsize$\pm$1.48} & 61.88{\scriptsize$\pm$1.71} \\
\texttt{Pubmed} & 65.43{\scriptsize$\pm$1.37} & 76.26{\scriptsize$\pm$1.19} & 71.56{\scriptsize$\pm$0.94} & 83.42{\scriptsize$\pm$0.87} \\
\texttt{ogbn-arxiv} & 38.92{\scriptsize$\pm$2.45} & 52.37{\scriptsize$\pm$2.81} & 56.51{\scriptsize$\pm$1.93} & 51.16{\scriptsize$\pm$2.12} \\
\texttt{ogbn-products} & 60.15{\scriptsize$\pm$1.94} & 73.83{\scriptsize$\pm$1.76} & 64.54{\scriptsize$\pm$1.41} & 62.91{\scriptsize$\pm$1.28} \\
\bottomrule
\end{tabular}
\end{table}

\subsection{Integration with Various LLMs}
To demonstrate the generalizability of our \textit{GNN-as-Judge} framework, we evaluate its performance with different backbone LLMs. Table~\ref{tab:llm_integration} presents results using Mistral-7B-Instruct~\citep{jiang2023mistral7b} and Llama3-8B-Instruct~\citep{grattafiori2024llama} under the 3-shot setting, comparing their zero-shot baseline performance against our framework.

\begin{table}[htbp]
\centering
\caption{Performance comparison of different LLMs integrated with our GNN-as-Judge framework under 3-shot setting. Results show accuracy (\%) with standard deviation.}
\label{tab:llm_integration}
\setlength{\tabcolsep}{8pt}
\scriptsize
\begin{tabular}{l|l|ccccc}
\toprule
\textbf{LLM} & \textbf{Method} & \textbf{Cora} & \textbf{citeseer} & \textbf{Pubmed} & \textbf{ogbn-arxiv} & \textbf{ogbn-products} \\
\midrule
\multirow{2}{*}{Mistral-7B} 
& Zero-Shot & 59.78{\scriptsize$\pm$1.26} & 42.56{\scriptsize$\pm$0.33} & 77.24{\scriptsize$\pm$1.15} & 43.06{\scriptsize$\pm$2.45} & 50.80{\scriptsize$\pm$1.67} \\
& \textbf{GNN-as-Judge} & \textbf{76.16{\scriptsize$\pm$1.15}} & \textbf{73.08{\scriptsize$\pm$0.89}} & \textbf{84.79{\scriptsize$\pm$0.94}} & \textbf{57.36{\scriptsize$\pm$1.52}} & \textbf{79.02{\scriptsize$\pm$1.41}} \\
\midrule
\multirow{2}{*}{Llama3-8B} 
& Zero-Shot & 65.54{\scriptsize$\pm$0.26} & 58.17{\scriptsize$\pm$0.64} & 74.51{\scriptsize$\pm$0.39} & 50.18{\scriptsize$\pm$1.44} & 75.48{\scriptsize$\pm$1.54} \\
& \textbf{GNN-as-Judge} & \textbf{77.89{\scriptsize$\pm$1.28}} & \textbf{73.59{\scriptsize$\pm$0.73}} & \textbf{87.12{\scriptsize$\pm$0.89}} & \textbf{62.21{\scriptsize$\pm$1.45}} & \textbf{81.02{\scriptsize$\pm$1.23}} \\
\bottomrule
\end{tabular}
\end{table}
The results in Table~\ref{tab:llm_integration} demonstrate the remarkable robustness of our \textit{GNN-as-Judge} framework across diverse LLM families. Despite significant variations in the baseline zero-shot capabilities of these models, our framework consistently elevates performance to competitive levels across all architectures. 

\subsection{Performance on Heterophilic Graphs}
To evaluate the effectiveness of our \textit{GNN-as-Judge} framework on heterophilic graphs where connected nodes tend to have different labels, we conduct experiments on \texttt{Cornell} and \texttt{Wisconsin}~\citep{wang2025model} datasets using H$_2$GCN~\citep{zhu2020beyond} as the GNN backbone. These datasets exhibit low homophily ratios (0.11 for \texttt{Cornell} and 0.16 for \texttt{Wisconsin}), presenting unique challenges where traditional GNN assumptions of homophily do not hold.
\begin{table}[htbp]
\centering
\caption{Performance comparison on heterophilic graphs (\texttt{Cornell} and \texttt{Wisconsin}) using H$_2$GCN as GNN backbone under 3-shot setting. Results show accuracy (\%) with standard deviation.}
\label{tab:heterophilic_results}
\setlength{\tabcolsep}{12pt}
\scriptsize
\begin{tabular}{l|cc}
\toprule
\textbf{Method} & \textbf{Cornell} & \textbf{Wisconsin} \\
\midrule
H$_2$GCN & 48.77{\scriptsize$\pm$2.84} & 41.15{\scriptsize$\pm$3.12} \\
Zero-Shot & 79.84{\scriptsize$\pm$1.92} & 73.49{\scriptsize$\pm$1.35} \\
\textbf{GNN-as-Judge} & \textbf{84.65{\scriptsize$\pm$1.23}} & \textbf{76.95{\scriptsize$\pm$2.03}} \\
\bottomrule
\end{tabular}
\end{table}
While our primary focus is not on heterophilic graphs, these results demonstrate that our framework can seamlessly adapt to different GNN types without requiring architectural modifications. The consistent performance improvements over both the GNN baseline and zero-shot LLM approach validate the broad applicability of our collaborative pseudo-labeling strategy across diverse graph characteristics and GNN architectures.

\subsection{Memory Usage Analysis}
\begin{wrapfigure}[11]{r}{0.32\textwidth}
    \centering
    \vspace{-13pt}
    \includegraphics[width=0.32\textwidth]{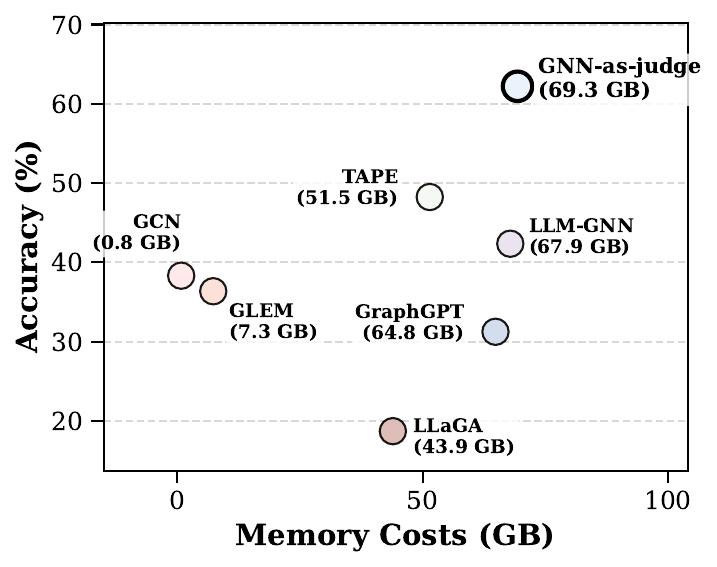}
    \vspace{-20pt}
    \caption{Memory usage versus accuracy.}
    \label{fig:memory_usage_analysis}
\end{wrapfigure}
Figure~\ref{fig:memory_usage_analysis} presents a comprehensive comparison of the memory efficiency and accuracy trade-offs across different methods. The scatter plot illustrates the relationship between peak GPU memory consumption during training and achieved accuracy for each approach under 3-shot settings on the \texttt{ogbn-arxiv} dataset using a single NVIDIA A100 80GB GPU. LLM-based approaches require substantially higher memory usage. The analysis reveals a clear accuracy-memory trade-off, where methods achieving higher performance generally require more computational resources due to the integration of large language models.

\subsection{Integration with Different GNN Backbones}

To assess the architectural robustness of the proposed framework, we evaluate its performance when paired with several widely used GNN backbones, including GCN, GAT, and GraphSAGE. Table~\ref{tab:gnn_backbone} reports the performance across three representative datasets under 3-shot setting. The results show that the framework achieves consistent improvements regardless of the backbone used, suggesting that the method integrates well with a variety of GNN architectures.

\begin{table}[h]
\centering
\caption{Performance of the proposed framework when integrated with different GNN backbones. Results are reported as mean accuracy $\pm$ standard deviation over three runs.}
\label{tab:gnn_backbone}
\scriptsize
\begin{tabular}{lccc}
\toprule
\textbf{Architecture} & \textbf{Citeseer} & \textbf{Cora} & \textbf{ogbn-arxiv} \\
\midrule
GCN        & 73.59\,$\pm$\,0.64 & 77.89\,$\pm$\,1.28 & 71.24\,$\pm$\,0.42 \\
GAT        & 74.36\,$\pm$\,0.71 & 79.12\,$\pm$\,0.62 & 70.45\,$\pm$\,0.51 \\
GraphSAGE  & 74.71\,$\pm$\,1.09 & 77.67\,$\pm$\,0.98 & 69.82\,$\pm$\,1.04 \\
\bottomrule
\end{tabular}
\end{table}

\section{Limitations}
\label{sec:limitaions}
While \textit{GNN-as-Judge} demonstrates strong performance, several limitations exist. First, \textit{GNN-as-Judge} incorporates several hyperparameters related to pseudo-label selection and loss weighting that require selection to achieve optimal performance. For future work, developing automatic methods to select the optimal number of pseudo-labels and reducing hyperparameter dependence could address these limitations. Additionally, updating the model concurrently with pseudo-label generation may further improve performance.


\section{Extended Related Works}
\label{sec:extemded_related_works}
\subsection{Large Language Models Preference Alignment}
Preference alignment refers to the process of aligning the outputs of language models with human preferences, often focusing on safety, helpfulness, and factuality \citep{askell2021general, ouyang2022training}. Reinforcement Learning from Human Feedback (RLHF) \citep{leike2018scalable, stiennon2020learning} is a prevalent method for aligning LMs with human preferences. The RLHF pipeline typically involves collecting human preference data, training a reward model, and using reinforcement learning to optimize the language model against this reward function \citep{bai2022training, christiano2017deep}.

While RLHF has proven effective, its reliance on a separate reward model introduces computational and methodological complexities. To address these challenges, Direct Preference Optimization (DPO) \citep{rafailov2023direct} has emerged as a more efficient alternative, eliminating the need for an explicit reward model by directly optimizing preference probabilities. Several extensions and refinements of DPO have since been proposed. For instance, KTO \citep{ethayarajh2024kto} incorporates insights from prospect theory to enhance preference learning. Other variants, such as GPO \citep{zhao2023group}, $\Psi$PO \citep{azar2024general}, and ODPO \citep{amini2024direct}, further improve or generalize DPO in different theoretical and practical aspects. Recent advancements, including ORPO \citep{hong2024orpo} and SimPO \citep{meng2024simpo}, simplify the alignment pipeline by removing the need for a reference model while maintaining competitive performance. Despite these advancements, the application and adaptation of preference alignment techniques to structured tasks such as graph-based learning remain underexplored.

\section{The Use of Large Language Models (LLMs)}
\label{sec:use_llms}
In accordance with transparency and reproducibility standards, we disclose that LLMs were employed to assist with the preparation of this manuscript. Specifically, we utilized LLMs for language polishing, grammatical correction, and stylistic improvements to enhance the clarity and readability of our technical writing.

\end{document}